\newcommand{\comment}[1]{{}}
\newcommand{\myc}[1]{}
\newtheorem{Lem}{Lemma}
\newtheorem{Thm}[Lem]{Theorem}
\newtheorem{Pro}[Lem]{Proposition}
\newtheorem{Prop}[Lem]{Property}
\newtheorem{Def}{Definition}
\newtheorem{Ex}{Example}
\newcommand{\id}[1]{\mbox{\it #1\/}}
\newcommand{\kw}[1]{\mbox{\tt #1}}
\def\p@enumiii{\theenumi(\theenumii)}
\renewcommand{\subparagraph}[1]{\smallskip \noindent \textbf{\textit{#1}} \hspace*{0.5em}}
\newcommand{\mgu}{\mbox{\rm mgu\xspace}}
\newcommand{\rv}{\mbox{\rm rv\xspace}}
\def\squareforqed{\hbox{\rlap{$\sqcap$}$\sqcup$}}
\def\qed{\ifmmode\squareforqed\else{\unskip\nobreak\hfil
\penalty50\hskip1em\null\nobreak\hfil\squareforqed
\parfillskip=0pt\finalhyphendemerits=0\endgraf}\fi}
\newcommand{\cpdf}[2]{\ensuremath{\langle#1,\mbox{\textcolor{blue}{$#2$}}\rangle}}
\title{Inference in Probabilistic~Logic~Programs with Continuous~Random~Variables}
\author[M.A.Islam, C.R. Ramakrishnan, I.V. Ramakrishnan]{ {\bf Muhammad Asiful Islam, C.R. Ramakrishnan, I.V. Ramakrishnan} \\
Dept. of Computer Science\\
Stony Brook University\\
Stony Brook, NY 11794 \\
\{maislam, cram, ram\}@cs.sunysb.edu\\
}
\begin{document}

\maketitle

\begin{abstract}
\begin{quote}
  Probabilistic Logic Programming (PLP), exemplified by Sato and
  Kameya's PRISM, Poole's ICL, Raedt et al's ProbLog and Vennekens et
  al's LPAD, is aimed at combining statistical and logical knowledge
  representation and inference.  However, the
  inference techniques used in these works rely on enumerating sets of
  explanations for a query answer.  Consequently, these languages
  permit very limited use of random variables with continuous
  distributions.  In this paper, we present a \emph{symbolic}
  inference procedure that uses constraints and represents sets of
  explanations without enumeration.  This permits us to reason over
  PLPs with Gaussian or Gamma-distributed random variables (in
  addition to discrete-valued random variables) and linear equality
  constraints over reals.  We develop the inference procedure in the
  context of PRISM; however the procedure's core ideas can be easily
  applied to other PLP languages as well.  An interesting aspect of
  our inference procedure is that PRISM's query evaluation process
  becomes a special case in the absence of any continuous random
  variables in the program.  The symbolic inference procedure enables
  us to reason over complex probabilistic models such as Kalman
  filters and a large subclass of Hybrid Bayesian networks that were
  hitherto not possible in PLP frameworks.
  (To appear in Theory and Practice of Logic Programming)
\end{quote}
\end{abstract}


\section{Introduction}
\label{sec:intro}

Logic Programming (LP) is a well-established language model for
knowledge representation based on first-order logic.  Probabilistic
Logic Programming (PLP) is a class of Statistical Relational Learning
(SRL) frameworks~\cite{srlbook} which are designed for combining
statistical and logical knowledge representation.

The semantics of PLP languages is defined based on the semantics
of the underlying non-probabilistic logic programs.  A large class of
PLP languages, including ICL~\cite{PooleICL},
PRISM~\cite{sato-kameya-prism}, ProbLog~\cite{deRaedt} and
LPAD~\cite{lpad}, have a declarative \emph{distribution semantics},
which defines a probability distribution over possible models of the
program.  Operationally, the combined statistical/logical inference is
performed based on the proof structures analogous to those created by
purely logical inference.  In particular, inference proceeds as in
traditional LPs except when a random variable's valuation is used.
Use of a random variable creates a branch in the proof structure, one
branch for each valuation of the variable. Each proof for an answer is
associated with a probability based on the random variables used in
the proof and their distributions; an answer's probability is
determined by the probability that at least one proof holds.  Since
the inference is based on enumerating the proofs/explanations for
answers, these languages have limited support for continuous random
variables.  We address this problem in this paper. 
A comparison of our work with recent efforts at extending other
SRL frameworks
to continuous variables appears in Section \ref{sec:related}.

We provide an inference procedure to reason over PLPs with Gaussian or
Gamma-distributed random variables (in addition to discrete-valued
ones), and linear equality constraints over values of these continuous
random variables.  We describe the inference procedure based on
extending PRISM with continuous random variables.  This choice is
based on the following reasons.  First of all, the use of explicit
random variables in PRISM simplifies the technical development.
Secondly, standard statistical models such as Hidden Markov Models
(HMMs), Bayesian Networks and Probabilistic Context-Free Grammars
(PCFGs) can be naturally encoded in PRISM.  Along the same lines, our
extension permits natural encodings of Finite Mixture Models (FMMs)
and Kalman Filters.  Thirdly, PRISM's inference
naturally reduces to the Viterbi
algorithm~\cite{Viterbi} over HMMs, and the Inside-Outside
algorithm~\cite{InsideOutside} over PCFGs.  \comment{This enables us to derive
an inference procedure that naturally reduces to the ones used for
evaluating Finite Mixture Models and Kalman Filters.}  The
combination of well-defined model theory and efficient inference has enabled the use of PRISM for
synthesizing knowledge in sensor networks~\cite{Sensys}.

It should be noted that, while the technical development in this paper is
limited to PRISM, the basic technique itself is applicable to other
similar PLP languages such as ProbLog and LPAD (see Section \ref{sec:extn}).

\comment{

Specifically, a logic program consists of a data base of facts
representing 
background domain knowledge and
 logical relationships (encoded as rules) that describe the relationships which hold amongst entities
 in the application domain. The user poses
queries and the run time system searches through the facts and rules to  logically deduce  answers to the queries using a theorem-proving approach  \cite{Lloyd}. Prolog is a an exemplar of a LP language \cite{Bratko2000}.

Traditionally   query evaluation in LP systems  deduces  unambiguous truths/falsehoods  of the relationships expressed in the program and hence   is not geared to deal with uncertainty in the relationships.  Statistical reasoning on the other hand naturally deals with uncertainty.
Thus combining statistical and logical reasoning in LP systems  has been and continues to be an important and fertile
research topic   (e.g. see
\cite{NgSub,Poole,lakshman,Muggleton,srlbook,PooleICL}).

One notable work on this topic is that of  Sato and Kameya \cite{sato-kameya-prism} who
developed the PRISM  language,  an extension of LP to include probabilities.
PRISM, unlike prior attempts at incorporating probabilities  in a LP
framework, allows the  use of random variables explicitly in the
program.  It combines logical and statistical modeling by allowing one to use
outcomes of trials of random variables in a logic program.
(a detailed overview of PRISM appears in a later section). 
 Standard
statistical models such as Hidden Markov Models (HMMs), Bayesian
Networks and Probabilistic Context-Free Grammars (PCFGs) can be
readily encoded in PRISM.  A fundamental contribution of PRISM is its
model theory, called \emph{distribution semantics}; analogous to the
least Herbrand model semantics for a logic program, a PRISM program
has a distribution of least models.  The parameters of the
distribution of models is obtained from the parameters of the random
variables used in the program.  PRISM also has a proof
theory~\cite{sato} that closely follows OLDT
resolution~\cite{OLDT} and gives an efficient inference procedure for
a large class of logical-statistical models.  In fact, PRISM's inference
naturally reduces to the Viterbi algorithm~\cite{Viterbi} over HMMs, and the
Inside-Outside algorithm~\cite{InsideOutside} over PCFGs.  This
combination of well-defined
model theory and efficient inference has enabled the use of PRISM for
synthesizing knowledge in sensor networks~\cite{Sensys}.

A major shortcoming of  PRISM is that it permits only finite-domain (i.e. discrete-valued)  random variables. This precludes
representing or evaluating a large class  of
probabilistic models that use continuous-domain
random variables.  We address this problem in this paper.  A
comparison of our work with recent efforts at extending other
statistical relational frameworks to
continuous variables appears in Section~\ref{sec:related}.
}

\comment{There has been a very recent attempt at addressing this problem in \cite{hproblog}. While a more detailed comparison appears in Section~\ref{sec:related}  on Related Works suffice it is to say here that this work cannot handle interesting problems such as Kalman Filters \cite{aibook} and Hybrid Bayesian Networks where the parent/child conditional dependencies are captured using arbitrary discrete/continuous random variable combinations \cite{hbn}. Thus a rigorous and extensive  treatment of probabilistic LP along the lines of PRISM semantics,  that includes both discrete and continuous random variables remains open and is the topic addressed in this paper.}

\paragraph{Our Contribution:}
We extend  PRISM at the language level  to seamlessly include  discrete as well as continuous random variables.  We develop a new inference  procedure to evaluate queries over such extended PRISM programs.
\begin{itemize}
\item  We extend the PRISM language for specifying distributions of
  continuous random variables, and linear equality constraints over such variables.
\item We develop a
  \emph{symbolic inference}  technique to reason with constraints on
  the random variables. PRISM's inference technique becomes a special
  case of our technique when restricted to logic
  programs with discrete random variables.
\item These two developments enable the encoding of rich statistical models such as
  Kalman Filters and a large class of Hybrid Bayesian Networks; and
  exact inference over such models, which
  were hitherto not  possible  in LP  and its probabilistic extensions. 
\end{itemize}

Note that the technique of using PRISM for in-network evaluation of
queries in a sensor network~\cite{Sensys} can now be applied directly when sensor
data and noise are continuously distributed.  Tracking and navigation
problems in sensor networks are special cases of the Kalman Filter
problem~\cite{DSN}.  There are a number of other network inference
problems, such as the indoor localization problem, that have been
modeled as FMMs~\cite{WiGEM}.  Moreover, our
extension permits reasoning over models with finite mixture
of Gaussians and discrete distributions (see Section~\ref{sec:extn}).  Our extension of PRISM
brings us closer to the ideal of finding a declarative basis for
programming in the presense of noisy data.

\comment{
The primary focus of this paper is the development of inference
techniques for probabilistic logic programs.  The important topic of
learning the probability distributions of the random variables in
extended PRISM programs from facts is a topic of future work.  
}The
rest of this paper is organized as follows.  We begin with a review of
related work in Section~\ref{sec:related}, 
and describe the PRISM framework in detail in Section~\ref{sec:prism}.  We
introduce the extended PRISM language and the symbolic inference
technique for the extended language in Section~\ref{sec:inference}.  
In section~\ref{sec:kalman} we show the use of this
technique on an example encoding of the Kalman Filter.  We conclude in Section~\ref{sec:extn} with a discussion on extensions to our inference procedure.


\section{Related Work}
\label{sec:related}

Over the past decade, a number of Statistical Relational Learning
(SRL) frameworks have been developed, which support modeling,
inference and/or learning using a combination of logical and
statistical methods.  These frameworks can be broadly classified as
statistical-model-based or logic-based, depending on how their
semantics is defined.  In the first category are frameworks such as
Bayesian Logic Programs (BLPs)~\cite{blp}, Probabilistic Relational
Models (PRMs)~\cite{prm}, and Markov Logic Networks (MLNs)~\cite{mln},
where logical relations are used to specify a model compactly.  A BLP
consists of a set of Bayesian clauses (constructed from Bayesian
network structure), and a set of conditional probabilities
(constructed from CPTs of Bayesian network). PRMs encodes discrete
Bayesian Networks with Relational Models/Schemas.  An MLN is a set of
formulas in first order logic associated with
weights.  The semantics of a model in these frameworks is given in
terms of an underlying statistical model obtained by expanding the
relations. 
\comment{
For instance, a Markov Random Field (MRF) is constructed
from an MLN with nodes drawn from the set of all ground instances of
predicates in the first order formulas.  The set of (ground)
predicates in a ground instance of a formula forms a factor, and the
factor potential is obtained from the formula's weight and its truth
value.  Inference in an MLN is thus reduced to inference over the
underlying MRF. 
}

\sloppypar
Inference in SRL frameworks such as PRISM~\cite{sato-kameya-prism},
Stochastic Logic Programs (SLP)~\cite{Muggleton}, Independent Choice
Logic (ICL)~\cite{PooleICL}, and ProbLog~\cite{deRaedt} is primarily driven by
query evaluation over logic programs.  In SLP, clauses of a logic
program are annotated with probabilities, which are then
used to associate probabilities with proofs (derivations in a logic
program).  ICL~\cite{Poole} consists of definite clauses and disjoint
declarations of the form $disjoint([h_{1}:p_{1},...,h_{n}:p_{n}])$
that specifies a probability distribution over the hypotheses (i.e.,
$\{h_{1},..,h_{n}\}$).  Any probabilistic knowledge representable in a
discrete Bayesian network can be represented in this framework.  While
the language model itself is restricted (e.g., ICL permits only
acyclic clauses), it had declarative distribution semantics.  This
semantic foundation was later used in other frameworks such as PRISM
and ProbLog.  
CP-Logic~\cite{CPlogic} is a logical language to represent probabilistic
causal laws, and its semantics is equivalent to probability distribution over
well-founded models of certain logic programs.  Specifications in
LPAD~\cite{lpad} resemble those in CP-Logic: probabilistic predicates
are specified with disjunctive clauses, i.e. clauses with multiple
disjunctive consequents, with a distribution defined over the
consequents.  LPAD has a distribution semantics, and a proof-based
operational semantics similar to that of PRISM.
ProbLog
specifications annotate facts in a logic
program with probabilities.  In contrast to SLP, ProbLog has a
distribution semantics and a proof-based operational semantics.  PRISM
(discussed in detail in the next section), LPAD and ProbLog are
equally expressive.  PRISM uses explicit random variables and a simple
inference but restricted procedure.  In particular, PRISM demands that
the set of proofs for an answer are pairwise mutually exclusive, and
that the set of random variables used in a single proof are pairwise
independent.  The inference procedures of LPAD and ProbLog lift these
restrictions.  

SRL frameworks that are based primarily on statistical inference, such
as BLP, PRM and MLN, were originally defined over discrete-valued
random variables, and have been naturally extended to support a
combination of discrete and continuous variables.  Continuous
BLP~\cite{hblp} and Hybrid PRM~\cite{hprm} extend their
base models by using Hybrid Bayesian Networks~\cite{hbn}.  Hybrid
MLN~\cite{hmln} allows description of continuous properties and
attributes (e.g.,  the formula $\id{length}(x) = 5$ with weight $w$)
deriving MRFs with continuous-valued nodes (e.g., $\id{length(a)}$ for
a grounding of $x$, with mean $5$ and standard deviation
$1/\sqrt{2w}$).

In contrast to BLP, PRM and MLN, SRL frameworks that are primarily
based on logical inference offer limited support for continuous
variables.  In fact, among such frameworks, only ProbLog has been
recently extended with continuous variables.  Hybrid ProbLog
~\cite{hproblog} extends Problog by adding a set of continuous
probabilistic facts (e.g., $(X_{i},\phi_{i})::f_{i}$, where $X_{i}$ is
a variable appearing in atom $f_{i}$, and $\phi_{i}$ denotes its
Gaussian density function). It adds three predicates namely
\emph{below, above, ininterval} to the background knowledge to
process values of continuous facts.  A ProbLog program may use a
continuous random variable, but further processing
can be based only on testing whether or not the variable's value lies
in a given interval.  As a consequence, statistical models such as
Finite Mixture Models can be encoded in Hybrid ProbLog, but others
such as certain classes of Hybrid Bayesian Networks (with continuous
child with continuous parents) and Kalman Filters cannot be encoded.
The extension to PRISM described in this paper makes the framework
general enough to encode such statistical models.

More recently, \cite{apprProblog} introduced a sampling based approach for
(approximate) probabilistic inference in a ProbLog-like language that
combines continuous and discrete random variables. The inference
algorithm uses forward chaining and rejection sampling.  The language
permits a large class of models where discrete and continuous variables may be combined
without restriction.  
In contrast, we propose an exact inference algorithm with a more restrictive
language, but ensure that inference 
matches the complexity of specialized
inference algorithms for important classes of statistical models
(e.g., Kalman filters).


\section{Background: an overview of PRISM}
\label{sec:prism}

\comment{
\begin{figure}
  \small
  \begin{center}
    \begin{minipage}[t]{3in}
\begin{verbatim}
hmm(N, T) :-
    msw(init, S),
    hmm_part(0, N, S, T).

hmm_part(I, N, S, T) :-
    I < N, NextI is I+1,
    msw(trans(S), I, NextS),
    obs(NextI, A),
    msw(emit(NextS), NextI, A),
    hmm_part(NextI, N, NextS, T).
hmm_part(I, N, S, T) :- I=N, S=T.
\end{verbatim}
    \end{minipage}
  \end{center}
  \caption{PRISM program for an HMM}
  \label{fig:hmm-prism}
\end{figure}
}

PRISM programs have Prolog-like syntax (see Fig.~\ref{fig:hmm-prism}).
In a PRISM program the \texttt{msw} relation (``multi-valued switch'')
has a special meaning: \texttt{msw(X,I,V)} says that 
\texttt{V} is the outcome of the
\texttt{I}-th instance from a family \texttt{X} of random
processes\footnote{Following PRISM, we often omit the instance number
  in an \texttt{msw} when a program uses only one instance from a
  family of random processes.}.  The set of variables $\{\mathtt{V}_i
\mid \mathtt{msw(}p, i, \mathtt{V}_i\mathtt{)}\}$ are i.i.d. for a given random process $p$.
\comment{ The
\texttt{msw} relation provides the mechanism for using random
variables, thereby allowing us to weave together statistical and
logical aspects of a model into a single program.  }
The distribution
parameters of the random variables are specified separately.

The program in Fig.~\ref{fig:hmm-prism} encodes a Hidden Markov Model
(HMM) in PRISM. 
\begin{wrapfigure}{r}{0.48\textwidth}
\vspace{-5pt}
  \begin{center}
 \includegraphics[width=0.39\textwidth]{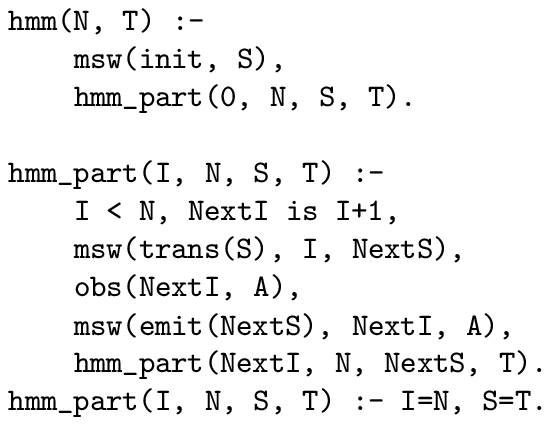}
 \end{center}
  \caption{PRISM program for an HMM}
  \label{fig:hmm-prism}
\vspace{-5pt}
\end{wrapfigure}
The set of observations is encoded as facts of predicate
\texttt{obs}, where \texttt{obs(I,V)} means that value \texttt{V} was
observed at time \texttt{I}.
In the figure, the clause defining
\texttt{hmm} says that \texttt{T} is the \texttt{N}-th state if we
traverse the HMM starting at an initial state \texttt{S} (itself the
outcome of the random process \texttt{init}).  In
\texttt{hmm\_part(I, N, S, T)}, \texttt{S} is the
\texttt{I}-th state, \texttt{T} is the \texttt{N}-th state. The first clause of
\texttt{hmm\_part} defines the conditions under which we go from
the \texttt{I}-th state \texttt{S} to the \texttt{I+1}-th state \texttt{NextS}.
\comment{
\begin{enumerate}
\item \texttt{msw(trans(S), I, NextS)} means that \texttt{NextS} is
  a random variable whose distribution depends on the value of \texttt{S};

\item \texttt{obs(NextI,A)} means that symbol \texttt{A} is at the
  \texttt{I+1}-th position in the observation sequence; and 

\item \texttt{msw(emit(NextS), NextI, A)} means that the observed symbol
  \texttt{A} is a random variable whose distribution depends on \texttt{NextS}.
\end{enumerate}
}
Random processes \texttt{trans(S)} and \texttt{emit(S)} give the distributions of
transitions and emissions, respectively, from state \texttt{S}.

The meaning of a PRISM program is given in terms of a
\emph{distribution semantics}~\cite{sato-kameya-prism,sato}.
A PRISM program is treated as a 
non-probabilistic logic program over a set of probabilistic facts, 
the \texttt{msw} relation.  An instance
of the \texttt{msw} relation defines one choice of values of all
random variables.  \comment{ A PRISM program, given an
instance of the \texttt{msw} relation, is a
non-probabilistic logic program, and its semantics is its
least Herbrand model.  Thus a}
A PRISM program is 
associated with a set of least models,
one for each \texttt{msw} relation instance.  A probability
distribution is then defined over the set of models, based on the
probability distribution of the \texttt{msw} relation instances.  This
distribution is the semantics of a PRISM
program.  Note that the distribution semantics is declarative.
For a subclass of 
programs, PRISM has an efficient procedure for
computing this semantics based on OLDT resolution~\cite{OLDT}.
\comment{, a
proof technique with memoization for definite logic programs.
}

\comment{
PRISM's inference can be understood, at a high level, in terms
of query evaluation in (non-probabilistic) logic programs
 which proceeds as
follows.  A query is, in general, a \emph{conjunction} of subgoals
$G_1, G_2, \ldots, G_n$, where each subgoal $G_i$ is an instance of a
predicate.  One step of evaluation is done by \emph{deriving} a new
query using a resolution rule, and applying such single-step
transformations until no further derivation is possible.
The sequence of queries generated by
this process is called a \emph{derivation}; a derivation is successful
if the sequence terminates in $\id{true}$.
}

\comment{
For instance, in OLD resolution used by Prolog, the
first subgoal is selected. Let  $(H :- B_1, \ldots B_k)$ be a clause
and let $\theta$ be the most general unifier (mgu) of $H$ and $G_1$:
i.e. $\theta$ is a most general substitution of variables
in $H$ and $G_1$ such that $H\theta = G_1\theta$.  If such a $\theta$
exists, then $B_1, \ldots, B_k, G_2, \ldots G_n$ is the next query in
the derivation.  Let $G$ be a subgoal with a successful derivation,
and let $\hat{\theta}$ be the composition of
mgu's used in each step in the derivation.  Then $G\hat{\theta}$ is an
\emph{answer} to $G$.  OLDT resolution maintains subgoals and answers
encountered during query evaluation
in an auxiliary table.  It uses an additional resolution
rule where a subgoal may be resolved with answers previously computed
for that subgoal.  Sharing results of computations using memo tables
makes OLDT efficient: enabling, e.g., cubic-time parsing for arbitrary
context-free grammars expressed as logic programs.
}

Inference in PRISM proceeds as follows.
When the goal selected at a step is of the form
\texttt{msw(X,I,Y)}, then \texttt{Y} is bound to a possible outcome of a
random process \texttt{X}.  
\emph{Thus in PRISM, derivations are constructed by enumerating the
  possible outcomes of each random variable.}
The derivation step is
associated with the probability of this outcome.  If all random
processes encountered in a derivation are independent, then the
probability of the derivation is the product of probabilities of each
step in the derivation.  If a set of derivations are pairwise mutually
exclusive, the probability of the set is the sum of probabilities of
each derivation in the set.   PRISM's evaluation procedure is
defined only when the independence and exclusiveness assumptions
hold.  
Finally, the probability of an answer 
is the probability of the set of derivations of that answer. 
\comment{The total probability of all answers to
a subgoal is called the \emph{inside} probability of the subgoal.}

\comment{
As an illustration, consider the query \texttt{hmm(n,T)} where
\texttt{n} is a fixed integer, evaluated over program in
Fig.~\ref{fig:hmm-prism}.   One step of resolution derives goal of
the form \texttt{msw(init,S), hmm\_part(0,n,S,T)}.  Now note that
there are a number of possible next steps: one for each value in the
range of \texttt{init}.  For instance, if the range of \texttt{init}
is $\{\mathtt{s0}, \mathtt{s1}\}$, there are two possible next steps:
\texttt{hmm\_part(0,n,s0,T)} and \texttt{hmm\_part(0,n,s1,T)}.
}

\comment{
Note that evaluation of \texttt{hmm(n,T)} over the program in
Fig.~\ref{fig:hmm-prism} obeys the independence and exclusiveness
assumptions of PRISM.  For instance, process \texttt{trans(S)}
at the \texttt{I}-th step is independent of process
\texttt{emit(NextS)} at the \texttt{I}+1-th step.  The only branches
in the proofs are due to different outcomes of same random process
(e.g., \texttt{init} described in the previous paragraph).
}

\comment{
Answers to \texttt{hmm(n,T)} and their probabilities give the
\emph{filter} distribution of the \texttt{n}-th state.  In addition to
answer probabilities, PRISM can compute the probability that a subgoal
$G'$ is encountered in some derivation starting from query $G$; this
is called the \emph{outside} probability of $G'$ w.r.t. $G$.  In the
program in Fig.~\ref{fig:hmm-prism}, the outside probability of
\texttt{hmm\_part(i,n,S,T)} for different values of \texttt{S}
w.r.t. initial query \texttt{hmm(n,T)} gives the filter distribution
of the \texttt{i}-th state.  This distribution, multiplied with the
inside probability for query \texttt{hmm\_part(i,n,S,T)} gives the
\emph{smoothed} distribution of the \texttt{i}-th state.
}



\section{Extended PRISM}
\label{sec:language}

Support for continuous variables is added by modifying PRISM's
language in two ways.   We use the \texttt{msw}
relation to sample from discrete as well as continuous distributions.
In PRISM, a special relation
called \texttt{values} is used to specify the ranges of values of
random variables; the probability mass functions are specified using
\texttt{set\_sw} directives.  In our extension, we extend the
\texttt{set\_sw} directives to specify probability density functions
as well.  For instance, \texttt{set\_sw(r, norm(Mu,Var))} specifies
that outcomes of  random processes \texttt{r} have Gaussian
distribution with mean 
\texttt{Mu} and variance \texttt{Var}\footnote{The technical
  development in this paper considers only univariate Gaussian
  variables; see Discussions section on a discussion on how
  multivariate Gaussian as well as other continuous distributions are
  handled.}.  Parameterized families of random
processes may be specified, as long as the parameters are
discrete-valued.  For 
instance, \texttt{set\_sw(w(M), norm(Mu,Var))} specifies a family of
random processes, with one for each value of \texttt{M}. As
in PRISM, \texttt{set\_sw} directives may be specified
programmatically; for instance, in the specification of \texttt{w(M)},
the distribution parameters may be computed as functions of
\texttt{M}.

Additionally, we extend PRISM programs with linear equality
constraints over reals.  Without loss of generality, we
assume that constraints are written as linear equalities of the form
$Y = a_1 * X_1 + \ldots + a_n * X_n + b$ where $a_i$ and $b$ are all
floating-point constants.
\comment{ , or inequalities of the form $Y < a$ or $Y>a$
for some floating point constant $a$. 
Note that inequalities comparing
two variables can be expressed as an inequality comparing a linear
function of the two variables and a constant.}
The use of constraints enables us to encode Hybrid Bayesian Networks and Kalman Filters as
extended PRISM programs.  In the following, we use \emph{Constr} to
denote a set (conjunction) of linear equality 
constraints.  We also denote by $\overline{X}$ a vector of variables
and/or values, explicitly specifying the size only when it is not
clear from the context.  This permits us to write linear equality
constraints compactly (e.g., $Y = \overline{a}\cdot\overline{X} +b$).
\comment{

\begin{Ex}[Hybrid Bayesian Network~\cite{hbn}]
In the following PRISM program, which encodes a Hybrid Bayesian
Network, \texttt{R} and \texttt{S}  are valuations of random variables
\texttt{r} and \texttt{s}, respectively, such that \texttt{r} is the
parent of \texttt{s}.   The conditional dependency between \texttt{s}
and \texttt{r} is specified by generating a value for \texttt{s} as
as a linear combination of \texttt{r}'s value
and a noise term.

\begin{small}
  \begin{minipage}{3.0in}
\begin{verbatim}
hbn(R, S) :- msw(r, R),
             msw(noise, E),
             S = R + E.

% Ranges of RVs
values(r, real).
values(noise, real).
% PDFs and PMFs:
:- set_sw(r, norm(1.0, 0.1)),
   set_sw(noise, norm(0.0, 0.2)).
\end{verbatim}
\end{minipage}
\end{small}
\qed
  \label{ex:hbn}
\end{Ex}
}

Encoding of Kalman Filter specifications
uses
linear constraints and closely follows the structure of the HMM specification,
and is shown in Section~\ref{sec:kalman}.

\comment{
\begin{Ex}[Finite Mixture Model]
In the following PRISM program, which encodes a finite mixture model~\cite{fmm},
{\rm \texttt{msw(m, M)}} chooses one distribution from a finite set of
continuous 
distributions, {\rm \texttt{msw(w(M), X)}} samples {\rm \texttt{X}} from the chosen distribution.

\begin{small}
  \begin{minipage}{3.0in}
\begin{verbatim}
fmix(X) :- msw(m, M),
           msw(w(M), X).

% Ranges of RVs
values(m, [a,b]).
values(w(M), real).
% PDFs and PMFs
:- set_sw(m, [0.3, 0.7]),
   set_sw(w(a), norm(1.0, 0.2)),
   set_sw(w(b), norm(1.05, 0.1)).
\end{verbatim}
\end{minipage}
\end{small}
\qed
  \label{ex:fmm}
\end{Ex}
}

\paragraph{Distribution Semantics:}
We extend PRISM's distribution semantics for continuous random
variables as follows.  The idea is to construct a probability space
for the \texttt{msw} definitions (called probabilistic facts in PRISM) and then
extend it to a probability space for the entire program using least
model semantics.  Sample space for the probabilistic facts is constructed from
those of discrete and continuous random variables. The sample space of
a continuous random variable is the set of real numbers, $\Re$.  The
sample space of a set of random variables is a Cartesian product of
the sample spaces of individual variables.  We complete the definition
of a probability space for $N$ continuous random variables by
considering the Borel $\sigma$-algebra over $\Re^N$, and defining a
Lebesgue measure on this set as the probability measure.  Lifting the
probability space to cover the entire program needs one significant
step.  We use the least model semantics of constraint logic
programs~\cite{JaffarCLP} as the basis for defining the semantics of
extended PRISM programs.  A point in the sample space is an arbitrary
interpretation of the program, with its Herbrand universe and $\Re$ as the
domain of interpretation.  For each sample, we distinguish between the
interpretation of user-defined predicates and probabilistic facts.  Note that
only the probabilistic facts have probabilistic behavior in PRISM; the rest of
a model is defined in terms of logical consequence.  Hence, we can
define a probability measure over a set of sample points by using the
measure defined for the probabilistic facts alone.  The semantics of an
extended PRISM program is thus defined as a distribution over its
possible models.


\section{Inference}
\label{sec:inference}

\newcommand{\marginalize}{\mathbb{M}}
\newcommand{\integrate}{\mathbb{I}}
\newcommand{\project}{\mathbb{P}}

Recall that PRISM's inference explicitly enumerates outcomes of random
variables in derivations.  The key to inference in the presence of
continuous random variables is avoiding enumeration by 
representing the derivations and their attributes symbolically.  A single
step in the construction of a symbolic derivation is defined below.

\begin{Def}[Symbolic Derivation]
A goal $G$ \emph{directly derives}  goal $G'$, denoted $G \rightarrow
G'$,  if:
\begin{description}
\item[PCR:] $G = q_1(\overline{X_1}), G_1$, and there exists a
  clause in the program,\\ $q_1(\overline{Y}) :- r_1(\overline{Y_1}),
  r_2(\overline{Y_2}), \ldots, r_m(\overline{Y_m})$, such that $\theta =
  \mgu(q_1(\overline{X_1}), q_1(\overline{Y}))$; then, $G' =
  (r_1(\overline{Y_1}), r_2(\overline{Y_2}), \ldots,
  r_m(\overline{Y_m}), G_1)\theta$; 
\item[MSW:] $G = \mathtt{msw}(\rv(\overline{X}), Y), G_1$: then
  $G' = G_1$;
\item[CONS:]$G = \id{Constr}, G_1$ and $\id{Constr}$ is satisfiable: then $G' = G_1$.
\end{description}
A \emph{symbolic derivation} of $G$ is a sequence of goals $G_0, G_1,
\ldots$ such that $G=G_0$ and, for all $i \geq 0$, $G_i \rightarrow G_{i+1}$.
\end{Def}
We only consider successful derivations, i.e., the last step of a derivation resolves to an empty clause.
Note that the traditional notion of derivation in a logic program
coincides with that of symbolic derivation when the selected subgoal
(literal) is not an \texttt{msw} or a constraint. When the selected
subgoal is an \texttt{msw}, PRISM's inference will construct the next
step by enumerating the values of the random variable.  In contrast,
symbolic derivation skips \texttt{msw}'s and constraints and
continues with the remaining subgoals in a goal.  The effect of these
constructs is computed by associating (a) variable type information
and (b) a success function (defined below) with each goal in the
derivation.  The symbolic derivation for the goal \texttt{widget(X)} over the program
in Example~\ref{ex:mm} is shown in
Fig.~\ref{fig:fmmder}.

\begin{figure}[h]
  \centering
  \begin{subfigure}[b]{0.4\textwidth}
    \includegraphics{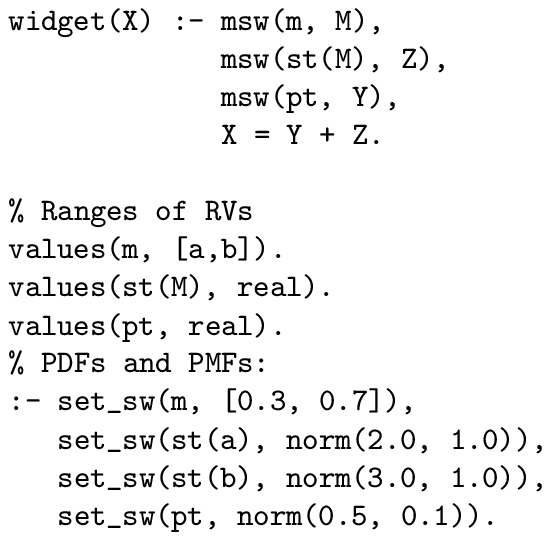}
   \caption{Mixture model program}
    \label{fig:fmmcode}
  \end{subfigure}
 \quad
  \begin{subfigure}[b]{0.55\textwidth}
    \hspace*{-.2in}
   \includegraphics{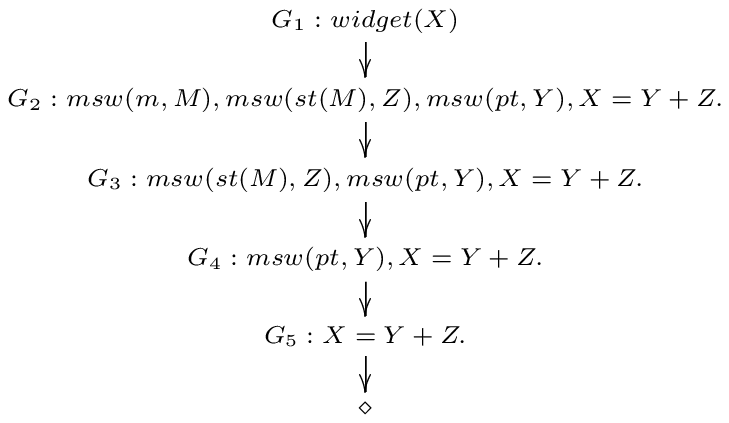}
   \caption{Symbolic derivation for goal \texttt{widget(X)}}
    \label{fig:fmmder}
  \end{subfigure}
   \caption{Finite Mixture Model Program and Symbolic Derivation}
  \label{fig:fmm}
\end{figure}
\vspace{-5pt}

\begin{Ex}
Consider a factory with two machines \texttt{a} and \texttt{b}.
Each machine produces a widget structure and then the structure is painted with a color.
In the program Fig.~\ref{fig:fmmcode},
\texttt{msw(m, M)} chooses either machine \texttt{a} or \texttt{b}, 
\texttt{msw(st(M), Z)} gives the cost \texttt{Z} of a product structure,
\texttt{msw(pt, Y)} gives the cost  \texttt{Y} of painting, and 
finally \texttt{X = Y + Z} returns the price of a painted widget \texttt{X}.
\qed
  \label{ex:mm}
\end{Ex}

\begin{figure}[h]
  \begin{subfigure}[t]{0.28\textwidth}
    \centering
    \includegraphics{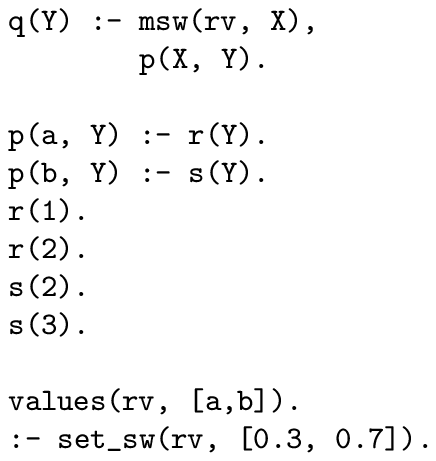}
    \caption{Example program}
    \label{fig:symcode}
  \end{subfigure}
 ~~~~~~~~~~~~~~~
  \begin{subfigure}[t]{0.45\textwidth}
    \centering
    \includegraphics{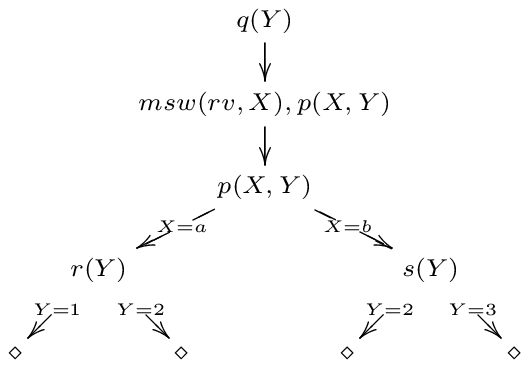}
    \caption{Symbolic derivation for goal \texttt{q(Y)}}
    \label{fig:symder}
  \end{subfigure}
  \caption{Symbolic derivation}
  \label{fig:symcd}
\end{figure}
\vspace{-10pt}

\begin{Ex}
\label{ex:symder}
This example illustrates how symbolic derivation differs from traditional logic programming derivation.
Fig.~\ref{fig:symder} shows the symbolic derivation for goal $q(Y)$ in Fig.~\ref{fig:symcode}. 

Notice that the symbolic derivation still makes branches in the derivation
tree for various logic definitions and outcomes. But the main difference with traditional logic derivation is that it skips \texttt{msw} and \texttt{Constr} definitions, and continues with the remaining subgoals in a goal.
\qed
\end{Ex}

\paragraph{Success Functions:}
Goals in a symbolic derivation may contain variables whose values are
determined by \texttt{msw}'s appearing subsequently in the
derivation.  With each goal $G_i$ in a symbolic derivation, we
associate a set of variables, $V(G_i)$, that is a subset of variables
in $G_i$.  The set $V(G_i)$ is such that the variables in $V(G_i)$ 
subsequently appear as parameters or
outcomes of \texttt{msw}'s in some subsequent  goal $G_j$, $j \geq i$.
We can further partition $V$ into two disjoint sets, $V_c$ and $V_d$,
representing continuous and discrete variables,
respectively.  The sets $V_c$ and $V_d$ are called the derivation
variables of $G_i$, defined below.
\begin{Def}[Derivation Variables]
Let $G \rightarrow G'$ such that $G'$ is derived from $G$ using:
\begin{description}
\item[PCR:] Let $\theta$ be the mgu in this step.  Then $V_c(G)$ and
  $V_d(G)$ are the
  largest sets of variables in $G$ such that $V_c(G)\theta \subseteq
 V_c(G')$ and $V_d(G)\theta \subseteq V_d(G')$.
\item[MSW:] Let $G= \kw{msw}(\rv(\overline{X}), Y), G'$.  Then $V_c(G)$
  and $V_d(G)$ are the largest sets of variables in $G$ such that
  $V_c(G) \subseteq V_c(G') \cup \{Y\}$, and $V_d(G) \subseteq V_d(G')
  \cup \overline{X}$ if $Y$ is continuous, 
  otherwise $V_c(G) \subseteq V_c(G')$, and $V_d(G) \subseteq V_d(G')
  \cup \overline{X}  \cup \{Y\}$.
\item[CONS:] Let $G= \id{Constr}, G'$.  Then $V_c(G)$
  and $V_d(G)$ are the largest sets of variables in $G$ such that
  $V_c(G) \subseteq V_c(G') \cup \id{vars}(\id{Constr})$, and
  $V_d(G) \subseteq V_d(G')$.
\end{description}
\end{Def}

Given a goal $G_i$ in a symbolic derivation, we can associate with it a
\emph{success function}, which is a function from the set of all
valuations of $V(G_i)$ to $[0,1]$.  Intuitively, the success function
represents the probability that the symbolic derivation represents a
successful derivation for each valuation of $V(G_i)$.

\paragraph{Representation of success functions:}
Given a set of variables $\mathbf{V}$, let $\mathbf{C}$ denote the set
of all linear equality constraints over reals using
$\mathbf{V}$.  Let $\mathbf{L}$ be the set of all linear functions over
$\mathbf{V}$ with real coefficients.  Let
$\mathcal{N}_{X}(\mu,\sigma^2)$ be the PDF of a univariate Gaussian
distribution with mean $\mu$ and variance $\sigma^2$, and $\delta_{x}(X)$ be the Dirac delta function which is zero everywhere except at $x$ and integration of the delta function over its entire range is 1.  Expressions of
the form $k*\prod_{l} \delta_{v}(V_{l}) \prod_{i} \mathcal{N}_{f_i}$, where $k$ is a non-negative
real number and $f_i \in \mathbf{L}$, are called \emph{product PDF
  (PPDF) functions 
  over} $\mathbf{V}$.  We use $\phi$ (possibly subscripted) to denote
such functions.  A pair $\langle \phi, C\rangle$ where $C \subseteq \mathbf{C}$ is
called a \emph{constrained PPDF function}. A sum of a finite number of
constrained PPDF functions is called a \emph{success function}, represented as
$\sum_{i} \langle \phi_{i}, C_{i} \rangle$.

We use $C_{i}(\psi)$ to denote the
constraints (i.e., $C_{i}$) in the $i^{th}$ constrained PPDF function of
success function $\psi$;
and $D_{i}(\psi)$ to denote the $i^{th}$ PPDF
function of $\psi$.

\paragraph{Success functions of base predicates:}
The success function of a constraint $C$ is $\langle 1, C\rangle$.
The success function of \emph{true} is $\langle 1, \id{true}\rangle$.
The PPDF component of $\mathtt{msw}(\rv(\overline{X}), Y)$'s success
function  is the
probability density function of $\rv$'s distribution if $\rv$ is
continuous, and its probability mass function if $\rv$ is discrete; its constraint component
is \id{true}.

\begin{wrapfigure}[6]{r}{0.40\textwidth}
\begin{center}
\includegraphics[width=0.38\textwidth]{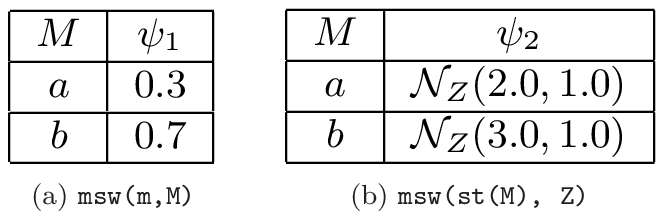}
 \caption{Success Functions}
 \label{fig:basesf}
\end{center}
\end{wrapfigure}

\begin{Ex}
The success function $\psi_1$ of \texttt{msw(m,M)} for the program in
Example~\ref{ex:mm} is such that $\psi_1 = 0.3 \delta_{a}(M) + 0.7 \delta_{b}(M)$.
Note that we can represent the success function using tables, where each table row denotes 
discrete random variable valuations. For example, the above 
success function can be represented as Fig.~\ref{fig:basesf}a.
Thus instead of using delta functions, we often omit it in examples and represent success functions using tables.

Fig.~\ref{fig:basesf}b represents the success function of \texttt{msw(st(M), Z)} for the program in
Example~\ref{ex:mm}.
Similarly, the success function $\psi_3$ of \texttt{msw(pt, Y)} for the program in
Example~\ref{ex:mm} is $\psi_3 =  \mathcal{N}_{Y}(0.5, 0.1)$.

Finally, the success function $\psi_4$ of \texttt{X = Y + Z} for the program in
Example~\ref{ex:mm} is $\psi_4 =  \cpdf{1}{X = Y + Z}$.
\qed
\end{Ex}

\paragraph{Success functions of user-defined predicates:}
If $G \rightarrow G'$ is a step in a derivation, then the success
function of $G$ is computed bottom-up based on the success
function of $G'$.  This computation is done using \emph{join} and
\emph{marginalize} operations on success functions.

\begin{Def} [Join]
Let $\psi_{1}=\sum_{i} \langle D_{i}, C_{i} \rangle$ and $\psi_{2} =
\sum_{j} \langle D_{j}, C_{j} \rangle$ be two success functions, then
join of $\psi_{1}$ and $\psi_{2}$ represented as $\psi_{1} * \psi_{2}$
is the success function $\sum_{i,j} \langle D_{i}D_{j}, C_{i} \wedge
C_{j} \rangle$. 
\end{Def}

\begin{Ex}
\label{join-ex}
Let Fig.~\ref{fig:sf1} and~\ref{fig:sf2} represent the success functions $\psi_{msw(m, M)}(M)$ and $\psi_{G_3}(X, Y, Z, M)$ respectively.

\begin{figure}[h]
  \centering
  \begin{subfigure}[t]{0.25\textwidth}
    \includegraphics{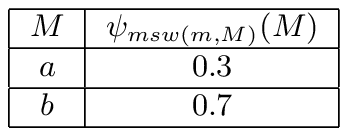}
    \caption{}
    \label{fig:sf1}
  \end{subfigure}
  ~~~~~~~~
\begin{subfigure}[t]{0.50\textwidth}
   \includegraphics{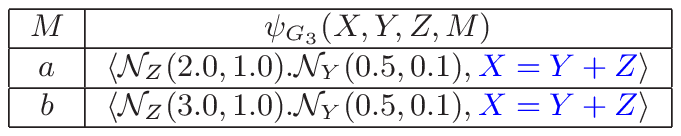}
   \caption{}
   \label{fig:sf2}
\end{subfigure}

  \begin{subfigure}[t]{0.6\textwidth}
    \includegraphics{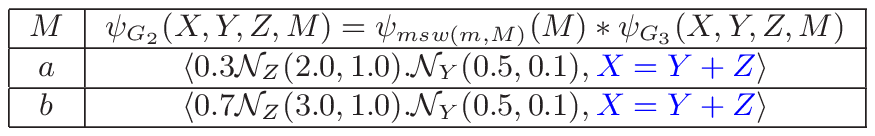}
    \caption{}
    \label{fig:sf3}
  \end{subfigure}
   \caption{Join of Success Functions}
  \label{fig:joinsf}
\end{figure}
\vspace{-5pt}

Then Fig.~\ref{fig:sf3} shows the join of $\psi_{msw(m, M)}(M)$ and $\psi_{G_3}(X, Y, Z, M)$.
\qed
\end{Ex}

Note that we perform a simplification of success functions after the join operation. 
We eliminate any PPDF term in $\psi$ which is inconsistent w.r.t. delta functions. 
For example, $\delta_{a}(M)\delta_{b}(M)=0$ as $M$ can not be both $a$ and $b$ at the same time.

Given a success function $\psi$ for a goal $G$, the success function
for $\exists X.\ G$ is computed by the marginalization operation.
Marginalization w.r.t. a discrete variable is straightforward and
omitted.  Below we
define marginalization w.r.t. continuous variables in two steps: first
rewriting the success function in a projected form and then doing the
required integration.

The goal of projection is to eliminate any linear constraint on $V$, where $V$ is the continuous variable to marginalize over. The projection operation involves finding a linear constraint 
(i.e., $V = \overline{a} \cdot \overline{X}+b$) on $V$ and replacing all occurrences of $V$ in the success function by  $\overline{a} \cdot \overline{X}+b$.

\begin{Def} [Projection]
Projection of a success function $\psi$ w.r.t. a
continuous variable $V$,  denoted by $\psi\downarrow_{V}$, is a
success function $\psi'$ such that\\
$\forall i.\ D_{i}(\psi') =
D_{i}(\psi)[\overline{a} \cdot \overline{X}+b / V]$; and 
$C_{i}(\psi') = (C_{i}(\psi) - C_{ip})[\overline{a} \cdot \overline{X}+b / V]$,\\
where $C_{ip}$ is a linear constraint ($V = \overline{a} \cdot
  \overline{X}+b$) on $V$ in $C_{i}(\psi)$ and
  $t[s/x]$ denotes replacement of all occurrences of $x$ in $t$ by $s$.
\end{Def}

Note that the replacement of $V$ by $\overline{a} \cdot \overline{X}+b$ in 
PDFs and linear constraints does not alter the general form of a success function. Thus projection returns a 
success function. Notice that if $\psi$ does not contain any linear constraint on $V$, then the projected form remains the same.

\begin{Ex}
Let $\psi_{1} = \cpdf{0.3  \mathcal{N}_{Z}(2.0, 1.0).\mathcal{N}_{Y}(0.5, 0.1)}{X = Y + Z}$ represent a success function. 
Then projection of $\psi_{1}$ w.r.t. $Y$ yields  
\begin{equation}
\label{prj-eq}
 \psi_{1}\downarrow_{Y} = 0.3  \mathcal{N}_{Z}(2.0, 1.0).\mathcal{N}_{X - Z}(0.5, 0.1).
\end{equation}
 Notice that $Y$ is replaced by $X - Z$. 
 \qed
 \end{Ex}

\begin{Pro}
\label{integration}
Integration of a PPDF function with respect to a variable $V$ is a PPDF function, i.e., 
\begin{align*}
\alpha \int^{\infty}_{-\infty} \prod_{k=1}^{m}
\mathcal{N}_{(\overline{a_{k}} \cdot
  \overline{X_{k}}+b_{k})}(\mu_{k},\sigma^{2}_{k}) dV = \alpha' \prod_{l=1}^{m'}
\mathcal{N}_{(\overline{a'_{l}} \cdot
  \overline{X'_{l}})+b'_{l}}(\mu'_{l},\sigma'^{2}_{l})
\end{align*}
where $V \in \overline{X_{k}}$ and $V \not\in \overline{X'_{l}}$.
\end{Pro}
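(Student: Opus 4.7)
My plan is to proceed by direct manipulation of the Gaussian PDFs, completing the square in $V$ and then re-expressing the result as a product of one-dimensional Gaussians with linear arguments.

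First, I would partition the factors into two groups: those whose argument $\overline{a_k}\cdot\overline{X_k}+b_k$ depends on $V$ (call this index set $S$) and those that do not (which simply pass through the integral unchanged). For $k\in S$, write the argument as $c_k V + d_k$, where $c_k$ is the coefficient of $V$ and $d_k$ is a linear function of the remaining variables. Setting $e_k = d_k-\mu_k$, the product of $V$-dependent factors becomes $(2\pi)^{-|S|/2}(\prod_k \sigma_k)^{-1}$ times
\[
\exp\!\left(-\tfrac{1}{2}\sum_{k\in S}\tfrac{(c_k V + e_k)^2}{\sigma_k^2}\right)
= \exp\!\left(-\tfrac{A}{2}V^2 - BV - \tfrac{C}{2}\right),
\]
with $A=\sum_k c_k^2/\sigma_k^2$, $B=\sum_k c_k e_k/\sigma_k^2$, $C=\sum_k e_k^2/\sigma_k^2$.

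Next, complete the square in $V$: the exponent equals $-\tfrac{A}{2}(V+B/A)^2 - \tfrac{AC-B^2}{2A}$. The first part integrates to $\sqrt{2\pi/A}$, independent of the remaining variables, and contributes only to the overall constant $\alpha'$. What remains is $\exp(-(AC-B^2)/(2A))$ times the factors that never contained $V$.

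The key algebraic step, which I expect to be the main obstacle, is showing that $\exp(-(AC-B^2)/(2A))$ is itself a PPDF expression in the remaining variables. Here I would invoke Lagrange's identity
\[
\Bigl(\sum_k \tfrac{c_k^2}{\sigma_k^2}\Bigr)\Bigl(\sum_k \tfrac{e_k^2}{\sigma_k^2}\Bigr) - \Bigl(\sum_k \tfrac{c_k e_k}{\sigma_k^2}\Bigr)^{\!2}
= \sum_{i<j} \tfrac{(c_i e_j - c_j e_i)^2}{\sigma_i^2 \sigma_j^2},
\]
so that $AC-B^2$ is a sum of squares of linear expressions $c_i e_j - c_j e_i$ in the remaining variables. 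Consequently,
\[
\exp\!\Bigl(-\tfrac{AC-B^2}{2A}\Bigr) = \prod_{i<j}\exp\!\Bigl(-\tfrac{(c_i e_j - c_j e_i)^2}{2A\sigma_i^2\sigma_j^2}\Bigr),
\]
and each factor is, up to a normalizing constant, a univariate Gaussian PDF $\mathcal{N}_{(c_i e_j - c_j e_i)}(0,\, A\sigma_i^2\sigma_j^2)$ whose argument is a linear function of the remaining variables (since each $e_k$ is).

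Finally, I would multiply this product together with the original $V$-independent factors, absorb all scalar normalizations (the $\sqrt{2\pi/A}$ from the $V$-integral, the reciprocal $(2\pi)^{|S|/2}\prod_k\sigma_k$, and the Gaussian normalizers introduced in the rewriting) into a new non-negative constant $\alpha'$. The resulting expression matches the required form $\alpha'\prod_l \mathcal{N}_{(\overline{a'_l}\cdot\overline{X'_l}+b'_l)}(\mu'_l,\sigma'^{2}_l)$ with $V\notin \overline{X'_l}$, completing the proof. Edge cases to check separately are $|S|=0$ (trivial, the integrand is $V$-free and integration yields an improper integral, so one must assume $S\neq\emptyset$ for the projection step to have been applicable) and $|S|=1$ (then $AC-B^2=0$ and only the $\sqrt{2\pi/A}$ constant survives).
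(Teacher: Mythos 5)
Your proof is correct, and it follows the same skeleton as the paper's (complete the square in $V$, integrate out the resulting Gaussian, show the residual exponent is a sum of squares of pairwise ``cross'' terms), but it does so by a genuinely more uniform route. The paper carries out the algebra in full only for standardized factors ($c_k=1$, $\mu_k=0$, $\sigma_k^2=1$): it first treats two such factors, then proves the residual-exponent identity $\tfrac{1}{n}\sum_i X_i^2-\tfrac{1}{n^2}\bigl(\sum_i X_i\bigr)^2=\tfrac{1}{n^2}\sum_{i<j}(X_i-X_j)^2$ by a two-stage induction, and finally only \emph{states} the formulas for arbitrary coefficients, means and variances ($\phi_2$ and $\phi_n$ with the constant $\sigma_{ij}^2$) without derivation. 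Your weighted Lagrange identity $\bigl(\sum_k c_k^2/\sigma_k^2\bigr)\bigl(\sum_k e_k^2/\sigma_k^2\bigr)-\bigl(\sum_k c_k e_k/\sigma_k^2\bigr)^2=\sum_{i<j}(c_i e_j-c_j e_i)^2/(\sigma_i^2\sigma_j^2)$ contains the paper's identity as the special case $c_k=\sigma_k=1$ and yields the general statement in one pass; specializing to two factors reproduces the paper's $\mathcal{N}_{a_2X_1-a_1X_2}(a_1\mu_2-a_2\mu_1,\,a_2^2\sigma_1^2+a_1^2\sigma_2^2)$, and your variance $A\sigma_i^2\sigma_j^2$ agrees exactly with the paper's $\sigma_{ij}^2=\bigl(\sum_k a_k^2\prod_{l\ne k}\sigma_l^2\bigr)/\prod_{k\ne i,j}\sigma_k^2$. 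Your explicit separation of the $V$-free factors and the remarks on $|S|=0$ (excluded by the hypothesis $V\in\overline{X_k}$, and handled separately in the paper's definition of integration) and $|S|=1$ (where $AC-B^2=0$ by Cauchy--Schwarz) are points the paper leaves implicit. The only step worth making explicit is that the Gaussian integral requires $A>0$, which holds because some $c_k\ne 0$ and all $\sigma_k^2>0$; with that noted, all normalizing factors are positive constants and absorb into $\alpha'$ as required.
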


For example, 
\comment{the integration of $\mathcal{N}_{a_{1} V-X_{1}}(\mu_{1}, \sigma_{1}^{2}) . \mathcal{N}_{a_{2} V-X_{2}}(\mu_{2}, \sigma_{2}^{2})$ w.r.t. variable $V$ is }
\begin{align}
\label{intgr-eq}
\int^{\infty}_{-\infty} \mathcal{N}_{a_{1} V - X_{1}} (\mu_{1}, \sigma_{1}^{2}) . \mathcal{N}_{a_{2} V - X_{2}}(\mu_{2}, \sigma_{2}^{2}) dV \notag \\ 
     = \mathcal{N}_{a_{2} X_{1} - a_{1} X_{2}} (a_{1} \mu_{2} - a_{2}\mu_{1} , a_{2}^{2} \sigma_{1}^{2} + a_{1}^{2} \sigma_{2}^{2}).
\end{align}
Here $X_{1}, X_{2}$ are linear combinations of variables (except $V$).
A proof of the proposition is presented in Section~\ref{sec:ppdfint}.

\begin{Def} [Integration]
\label{intgop}
Let $\psi$ be a success function that does not contain any linear constraints on $V$. Then integration of $\psi$ with
 respect to $V$, denoted by $\oint_{V}\psi$ is a
success function $\psi'$ such that
$\forall i. D_{i}(\psi') = \int D_{i}(\psi) dV$.
\end{Def}

It is easy to see (using Proposition~\ref{integration}) that the
integral of success functions are also success functions.
Note that if $\psi$ does not contain any PDF on $V$, then the integrated form remains the same.

\begin{Ex}
Let $\psi_{2} = 0.3  \mathcal{N}_{Z}(2.0, 1.0).\mathcal{N}_{X - Z}(0.5, 0.1)$ represent a success function.
Then integration of $\psi_{2}$ w.r.t. $Z$ yields
\begin{align}
\label{int-eq}
\oint_{Z}\psi_{2} &=  \int 0.3  \mathcal{N}_{Z}(2.0, 1.0).\mathcal{N}_{X - Z}(0.5, 0.1)dZ \notag\\
   &=0.3  \mathcal{N}_{X}(2.5, 1.1).  \textrm{   (using Equation~\ref{intgr-eq})} \notag
\end{align}
 \end{Ex}

\begin{Def} [Marginalize]
Marginalization of a success function $\psi$ with respect to a
variable $V$, denoted by $\marginalize(\psi, V)$, is a success
function $\psi'$ such that
\begin{align*}
\psi'  &= \oint_{V} \psi \downarrow_{V} 
\end{align*}
\end{Def}

We overload $\marginalize$ to denote marginalization over a set of
variables, defined such that
$\marginalize(\psi, \{V\} \cup \overline{X}) =
\marginalize(\marginalize(\psi, V), \overline{X})$ and
$\marginalize(\psi, \{\}) = \psi$.

\begin{Pro}
\label{opclosure}
The set of all success functions
is closed under join and marginalize operations.
\end{Pro}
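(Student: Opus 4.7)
The plan is to handle join and marginalize separately, since marginalize is itself a composition of two simpler operations (projection followed by integration) whose effects can be tracked individually on the canonical form of success functions.

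For join, I would argue directly from the definition. A success function is a finite sum of pairs $\langle \phi_i, C_i\rangle$, where each $\phi_i$ has the PPDF shape $k \prod_l \delta_{v_l}(V_l) \prod_i \mathcal{N}_{f_i}$ with $f_i \in \mathbf{L}$, and each $C_i$ is a conjunction of linear equalities in $\mathbf{C}$. Given $\psi_1 = \sum_i \langle D_i, C_i\rangle$ and $\psi_2 = \sum_j \langle D_j, C_j\rangle$, the join is $\sum_{i,j} \langle D_i D_j, C_i \wedge C_j\rangle$. I would observe that the product $D_i D_j$ is again a product of a nonnegative real, a finite set of delta factors, and a finite set of Gaussian factors whose arguments are still linear functions over $\mathbf{V}$, i.e. another PPDF (after the simplification step that collapses $\delta_a(M)\delta_b(M)$ to $0$ when $a \ne b$, or to a single delta when $a = b$); and $C_i \wedge C_j$ is again a finite conjunction of linear equalities. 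So each summand lies in the right class, and a finite sum of such pairs is again a success function.

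For marginalize, I would first show that projection $\psi \downarrow_V$ preserves the success-function form. Since $V$ is continuous and all delta factors in the PPDF form apply to discrete variables, substitution of $\overline{a}\cdot\overline{X}+b$ for $V$ affects only (i) Gaussian arguments $f_i \in \mathbf{L}$, which remain linear because the substitute is linear, and (ii) the linear equalities in $C_i$ other than the eliminated one $C_{ip}$, which remain linear for the same reason. If no linear constraint on $V$ occurs in $C_i$, projection is the identity on that summand. Hence $\psi \downarrow_V$ is again a finite sum of constrained PPDFs, i.e. a success function in which no remaining constraint mentions $V$. Then I would apply integration: by Proposition~\ref{integration}, integrating a single PPDF term over $V$ yields another PPDF, and integration distributes over the finite sum defining $\psi \downarrow_V$, leaving the constraints untouched (they no longer mention $V$). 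Therefore $\oint_V \psi\downarrow_V$ is a success function, and by definition so is $\marginalize(\psi, V)$. Extending to a set of variables follows immediately by induction on the cardinality of the set, using the recursive definition of $\marginalize$.

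The main obstacle is really just the bookkeeping in the projection step: one must verify that substituting a linear expression into a PPDF and its associated constraint set keeps every Gaussian argument and every remaining equality linear, and that the elimination of $C_{ip}$ cannot introduce any non-linear residue. Once that syntactic check is in place, integration closure follows directly from Proposition~\ref{integration}, and join closure is essentially immediate from the algebraic shape of PPDFs.
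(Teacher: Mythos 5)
Your proof is correct and follows essentially the same route as the paper: the paper treats closure under join and under projection as immediate from the syntactic form of PPDFs and constraint sets (noting after the Projection definition that linear substitution preserves the general form), and reduces the only substantive step---closure under integration---to Proposition~\ref{integration}, exactly as you do. Your write-up merely makes explicit the bookkeeping that the paper leaves as remarks, so there is nothing to add.
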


The success function for a derivation is defined as
follows.

\comment{
\begin{Def}[Success function of a derivation]
\label{sfder}
 Let $G \rightarrow G'$.  Then the success function of $G$, denoted by
 $\psi_G$, is computed from that of $G'$, based on the way $G'$ was derived:
\begin{description}
\item[PCR:] $\psi_G = \marginalize(\psi_{G'}, V(G') - V(G))$.
\item[MSW:] Let $G= \kw{msw}(\rv(\overline{X}), Y), G_1$.
  Then $\psi_G = \psi_{msw(rv(\overline{X}),Y)} * \psi_{G'}$.
\item[CONS:] Let $G= Constr, G_1$.  Then
  $\psi_G = \psi_{Constr} * \psi_{G'}.$
\end{description}
\end{Def}
}

\begin{Def}[Success function of a goal]
\label{sfder}
The success function of a goal $G$, denoted by
$\psi_G$, is computed based on the derivation $G \rightarrow G'$:
 \[
\psi_G  = \left\{
\begin{array}{ll}
\sum_{G'}\marginalize(\psi_{G'}, V(G') - V(G)) & \text{for all program clause resolution $G \rightarrow G'$}\\
\psi_{msw(rv(\overline{X}),Y)} * \psi_{G'} & \text{if $G= \kw{msw}(\rv(\overline{X}), Y), G'$}\\
\psi_{Constr} * \psi_{G'}  & \text{if $G= Constr, G'$}\\
\end{array}\right.
\]
\end{Def}

Note that the above definition carries PRISM's assumption that an
instance of a random variable occurs at most once in any derivation.
In particular, the PCR step marginalizes success functions w.r.t. a
set of variables; the valuations of the set of variables must be 
mutually exclusive for correctness of this step.  The MSW step joins
success functions; the goals joined must use independent random
variables for the join operation to correctly compute success
functions in this step.

\begin{Ex}
\label{exmmjm}
  Fig.~\ref{fig:fmmder} shows the symbolic derivation
  for the goal \texttt{widget(X)} over the mixture model program in
  Example~\ref{ex:mm}.  The success function of goal $G_{5}$ is $  \psi_{G_{5}}(X, Y, Z) = \cpdf{1}{X = Y + Z}$. 
  \begin{align*}
   \psi_{G_{4}}(X, Y, Z) = \psi_{msw(pt, Y)}(Y) * \psi_{G_{5}}(X, Y, Z) 
   = \cpdf{ \mathcal{N}_{Y}(0.5, 0.1)}{X = Y + Z}.
  \end{align*}
  
  The success function of goal $G_{3}$ is $\psi_{msw(st(M), Z)}(Z)*
  \psi_{G_{4}}(X, Y, Z)$ (Fig.~\ref{fig:sf2}).
  
 \comment{
  \begin{figure}[h]
  \centering
  \begin{subfigure}[b]{.48\textwidth}
    \includegraphics[width=1.0\textwidth]{table6}
    \caption{}
    \label{fig:sfg3}
  \end{subfigure}
 \begin{subfigure}[b]{.48\textwidth}
    \includegraphics[width=1.0\textwidth]{table7}
    \caption{}
    \label{fig:sfg2}
  \end{subfigure}
  \caption{Success Functions}
  \label{fig:mmjoin}
\end{figure}
}

Then join of $\psi_{msw(m, M)}(M)$ and $\psi_{G_{3}}(X, Y, Z, M)$ yields
 the success function in Fig.~\ref{fig:sf3}
(see Example~\ref{join-ex}).

  Finally, $\psi_{G_1}(X) = \marginalize(\psi_{G_2}(X, Y, Z, M),  \{M, Y, Z\})$. 
  
  First we marginalize $\psi_{G_2}(X, Y, Z, M)$ w.r.t. $M$: 
 \begin{align*}
 \psi_{G_2}' &= \marginalize(\psi_{G_2}, M) = \oint_{M} \psi_{G_2} \downarrow_{M} \\
  &= \cpdf{0.3 \mathcal{N}_{Z}(2.0, 1.0).\mathcal{N}_{Y}(0.5, 0.1)}{X = Y + Z}\\ &+ 
  \cpdf{0.7 \mathcal{N}_{Z}(3.0, 1.0).\mathcal{N}_{Y}(0.5, 0.1)}{X = Y + Z}.
 \end{align*}
 
 Next we marginalize the above success function w.r.t. $Y$:
 \begin{align*}
 \psi_{G_2}'' &= \marginalize(\psi_{G_2}', Y) = \oint_{Y} \psi_{G_2}' \downarrow_{Y} \\
  &= 0.3  \mathcal{N}_{Z}(2.0, 1.0).\mathcal{N}_{X - Z}(0.5, 0.1) + 0.7  \mathcal{N}_{Z}(3.0, 1.0).\mathcal{N}_{X - Z}(0.5, 0.1).
 \end{align*}
 
 Finally, we marginalize the above function over variable $Z$ to get $\psi_{G_1}(X)$:
 \begin{align*}
 \psi_{G_1}(X) = \marginalize(\psi_{G_2}'', Z)
  =  \oint_{Z} \psi_{G_2}'' \downarrow_{Z}
  = 0.3  \mathcal{N}_{X}(2.5, 1.1) + 0.7  \mathcal{N}_{X}(3.5, 1.1).
 \end{align*} 
\end{Ex}

\begin{Ex}
In this example, we compute success function of goal $q(Y)$ in Example~\ref{ex:symder}. 
Fig.~\ref{fig:symder} shows the symbolic derivation for goal $q(Y)$.
Success function of $r(Y)$ is $\delta_{1}(Y) +  \delta_{2}(Y)$, and 
success function of $s(Y)$ is $\delta_{2}(Y) +  \delta_{3}(Y)$.
Similarly, success function of $p(X,Y)$ is $\delta_{a}(X)(\delta_{1}(Y) +  \delta_{2}(Y)) + \delta_{b}(X)(\delta_{2}(Y) +  \delta_{3}(Y))$.
Now 
\begin{align*}
\psi_{q(Y)} &= \marginalize(\psi_{msw(rv, X)} * \psi_{p(X,Y)}, X)
\end{align*}
Success function of $msw(rv,X)$ is $0.3 \delta_{a}(X) + 0.7 \delta_{b}(X)$.
Join of $\psi_{msw(rv, X)}$ and $\psi_{p(X,Y)}$ yields
$0.3 \delta_{a}(X)(\delta_{1}(Y) +  \delta_{2}(Y)) + 0.7 \delta_{b}(X)(\delta_{2}(Y) +  \delta_{3}(Y))$.
Finally, $\psi_{q(Y)} = 0.3 (\delta_{1}(Y) +  \delta_{2}(Y)) + 0.7 (\delta_{2}(Y) +  \delta_{3}(Y))$.

When $Y=1$, only $p(a, 1)$ is true. Thus $\psi_{q(1)} = 0.3$. 
On the other hand, $\psi_{q(2)} = 1.0$ as both $p(a, 2)$ and $p(b,2)$ are true when Y=2.
Similarly,  $\psi_{q(3)} = 0.7$.
\qed
\end{Ex}

\paragraph{Complexity:}

Let $S_{i}$ denote the number of constrained PPDF terms in $\psi_i$;
$P_{i}$ denote the maximum number of product terms in any PPDF function
in $\psi_i$; and
$Q_{i}$ denote the maximum size of a constraint set ($C_{i}$) in
$\psi_i$. The time complexity of the two basic operations used in
constructing a symbolic derivation is as follows.

\begin{Pro}[Time Complexity]
The worst-case time complexity of $\id{Join}(\psi_{i},\psi_{j})$ is
$O(S_{i}*S_{j}*(P_{i}*P_{j}+Q_{i}*Q_{j}))$.

The worst-case time complexity of $\marginalize(\psi_{g}, V)$ is
$O(S_{g}*P_{g})$ when $V$ is discrete and $O(S_{g}*(P_{g}+Q_{g}))$ when $V$ is continuous.
\end{Pro}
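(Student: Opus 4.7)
The plan is to unwind each operation according to its definition and account separately for the cost contributed per constrained PPDF term, then sum over the output terms.

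For Join, the definition gives $\psi_i * \psi_j = \sum_{k,l} \langle D_k D_l, C_k \wedge C_l \rangle$, so the output has exactly $S_i \cdot S_j$ terms. For each pair $(k,l)$ I would bound two pieces of work: (a) constructing the product PPDF $D_k D_l$ together with the delta simplification noted after Example~\ref{join-ex} (detecting cancellations such as $\delta_{a}(M)\delta_{b}(M)=0$), and (b) forming $C_k \wedge C_l$ and a consistency check. The delta simplification compares each of the at most $P_i$ factors of $D_k$ against each of the at most $P_j$ factors of $D_l$, giving $O(P_i \cdot P_j)$ work per pair; analogously, pairwise compatibility checks on the at most $Q_i$ and $Q_j$ constraints give $O(Q_i \cdot Q_j)$. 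Multiplying by the number of output terms yields the claimed $O(S_i S_j (P_i P_j + Q_i Q_j))$ bound.

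For Marginalize I would follow $\marginalize(\psi_g, V) = \oint_V \psi_g \downarrow_V$ and analyze projection and integration separately on each of the $S_g$ summands. When $V$ is discrete, no linear equality constraint mentions $V$, so projection is a no-op; integration collapses a single delta factor on $V$ in each term in $O(P_g)$ time, yielding $O(S_g \cdot P_g)$ overall. When $V$ is continuous, projection first scans the at most $Q_g$ constraints of $C_k$ to locate a linear equality $V = \overline{a} \cdot \overline{X} + b$ and substitutes it throughout the at most $P_g$ factors of $D_k$, in $O(P_g + Q_g)$ time. Integration then applies Proposition~\ref{integration} to those factors of $D_k$ that mention $V$; a single left-to-right pass suffices because each invocation of equation~(\ref{intgr-eq}) collapses two Gaussian factors into one, so the per-term cost remains $O(P_g)$. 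Summing gives $O(S_g \cdot (P_g + Q_g))$.

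The main obstacle is justifying the multiplicative $P_i P_j$ and $Q_i Q_j$ factors in the Join bound: naive concatenation of the two PPDFs and of the two constraint sets only costs $O(P_i + P_j)$ and $O(Q_i + Q_j)$, so the quadratic factors must come from the pairwise simplification and consistency work, and one has to argue both that such simplification is required (otherwise $P$ and $Q$ may grow uncontrollably across subsequent derivation steps, invalidating the parameters used elsewhere) and that pairwise checking achieves it within the stated bound. A secondary subtlety is confirming that the combined-Gaussian integration in the continuous marginalize case can be executed in a single linear sweep rather than pairwise; this follows because each invocation of the integration identity consumes exactly the two Gaussian factors it combines into one, so the total number of combining steps is bounded by the number of original factors involving $V$.
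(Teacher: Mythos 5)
The paper states this proposition without giving a proof, so there is nothing to compare line-by-line; your reconstruction is the natural one, and the overall accounting --- number of output terms times per-term work, with the quadratic factors $P_iP_j$ and $Q_iQ_j$ attributed to pairwise delta-simplification and pairwise constraint-consistency checks rather than to mere concatenation --- is the only reading under which the stated Join bound makes sense. Your identification of that as the crux is correct, and the discrete-marginalization and projection costs are handled correctly.

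There is, however, one step that fails as literally described: the claim that continuous integration can be done by ``a single left-to-right pass'' in which ``each invocation of equation~(\ref{intgr-eq}) collapses two Gaussian factors into one.'' Equation~(\ref{intgr-eq}) is an \emph{integration} identity: it eliminates $V$ entirely, so it cannot be iterated across a product of $k>2$ factors mentioning $V$ (the integral of a product is not the product of partial integrals). The correct mechanism is the one in Section~\ref{sec:ppdfint}: complete the square over all $V$-dependent factors at once, or equivalently repeatedly apply the \emph{product-of-Gaussians} identity $\mathcal{N}_{V-X_1}\cdot\mathcal{N}_{V-X_2} = \mathcal{N}_{X_1-X_2}(\cdot)\cdot\mathcal{N}_{V-\beta}(\cdot)$, which combines two $V$-factors into one $V$-factor plus one residual factor \emph{without} integrating, and only then integrate the single surviving Gaussian in $V$. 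Note also that the appendix's closed form produces $\prod_{i\ne j,\,i<j}\mathcal{N}_{X_i-X_j}$, i.e.\ $\binom{k}{2}$ output factors; implemented literally this would exceed the $O(P_g)$ per-term budget, so the linear bound additionally relies on using the non-redundant ($k-1$-factor) representation obtained from the sequential combination. With that substitution your argument goes through; without it, the per-term integration cost is not justified.
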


Note that when computing  the success function of a goal in a
derivation, the join operation is limited to joining the success
function of a single
\texttt{msw} or a single constraint set to the success function of a
goal, and hence the parameters $S_i,P_i$, and $Q_i$ are typically small.
The complexity of the \emph{size} of success functions is as follows.

\begin{Pro}[Success Function Size]
  For a goal $G$ and its symbolic derivation, the following hold:
  \begin{enumerate}
 \vspace{-2pt}
 \item The maximum number of product terms in any PPDF function in $\psi_G$
    is linear in $|V_c(G)|$, the number of continuous variables in $G$.
  \item The maximum size of a constraint set in a constrained PPDF
    function in $\psi_G$ is linear in $|V_c(G)|$.
  \item The maximum number of constrained PPDF functions in
    any entry of $\psi_G$ is potentially exponential in the number of
    discrete random variables in the symbolic derivation.
    \vspace{-2pt}
  \end{enumerate}
\end{Pro}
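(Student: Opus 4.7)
The plan is to prove all three claims simultaneously by induction on the length of the symbolic derivation starting from $G$, reading the recursive definition of $\psi_G$ bottom-up. The base case is when $G$ is $\id{true}$ (or the derivation has reached an empty clause): then $\psi_G = \la 1, \id{true}\ra$, which contains a single constrained PPDF function with no PDF factors and an empty constraint set, so all three bounds hold trivially. For the inductive step I would assume the bounds hold for $\psi_{G'}$ whenever $G \rightarrow G'$, and then analyze each of the three rules (CONS, MSW, PCR) separately, tracking how the number of product terms, the size of the constraint set, and the number of constrained PPDF summands change relative to the corresponding changes in $V_c(G)$ and the number of discrete \texttt{msw}'s encountered.

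For claims 1 and 2, the key observations come from the CONS and MSW cases. In the CONS case, $\psi_G = \la 1,\id{Constr}\ra * \psi_{G'}$, so each constraint set grows by exactly one linear equality, while by the definition of derivation variables $V_c(G)$ may absorb new variables from $\id{vars}(\id{Constr})$; the linearity bound on $|C_i(\psi_G)|$ vs.\ $|V_c(G)|$ is therefore preserved. In the MSW case with a continuous $Y$, the joined factor $\psi_{\kw{msw}(\rv(\overline{X}),Y)}$ contributes exactly one new Gaussian PDF factor and introduces $Y$ into $V_c$, so the per-PPDF product length grows by one in lockstep with $|V_c|$; constraint sets are unchanged. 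In the MSW case with a discrete $Y$ and in pure variable renamings under PCR, neither the number of PDF factors nor the constraint set size can grow. Thus each continuous variable in $V_c(G)$ contributes at most one PDF factor and at most one linear constraint, giving the linear bounds.

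For claim 3, I would trace exactly where branching arises. Only a discrete MSW step multiplies the number of constrained PPDF summands: joining with $\psi_{\kw{msw}(\rv(\overline{X}),Y)}$, whose success function has one summand per outcome of $\rv$, multiplies the sum length by the size of that range. A continuous MSW or a CONS step preserves the number of summands, and the PCR step (via $\marginalize$) can only merge or eliminate summands, not create new ones. Iterating over the derivation therefore yields at most a product of the ranges of discrete switches encountered, i.e., potentially exponential in the number of discrete random variables, as claimed.

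The main obstacle I anticipate is the PCR step, which hides a $\marginalize$ over $V(G') \setminus V(G)$ and so combines projection with integration. I need to argue carefully that projecting by a linear equality $V = \overline{a}\cdot\overline{X}+b$ substitutes $V$ inside existing Gaussian arguments without creating new product factors (substitution preserves the form $\mathcal{N}_{\overline{a'}\cdot\overline{X'}+b'}$), and that the integration step of Proposition~\ref{integration} strictly reduces the number of PDF factors that mention the integrated variable (two or more such factors collapse into at most one factor not mentioning $V$). Combined with the fact that marginalizing out $V$ simultaneously removes $V$ from $V_c$, this shows that $\marginalize$ preserves the per-PPDF bounds in claims 1 and 2, and can only weakly decrease the count in claim 3, closing the induction.
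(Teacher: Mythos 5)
The paper states this proposition without proof (only an informal remark follows it), so there is no official argument to measure you against; your induction on derivation length, with a case split on the PCR, MSW and CONS rules and with the growth in the number of summands attributed to discrete \texttt{msw} joins, is the natural route and is consistent with what the authors evidently intend. One bookkeeping point for claim 3: the PCR rule sums the marginalized success functions over \emph{all} matching clauses, so PCR does add summands (cf.\ the two clauses for \texttt{p(X,Y)} in Example~\ref{ex:symder}); this does not threaten the ``potentially exponential'' conclusion, but it contradicts your assertion that PCR can only merge or eliminate summands.

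The genuine gap is in the step you yourself flag as the main obstacle. You close the induction for claim 1 by asserting that integrating out $V$ collapses the $m\ge 2$ Gaussian factors mentioning $V$ into at most one factor not mentioning $V$. That is false by the paper's own computation in the appendix: $\int_{-\infty}^{\infty}\prod_{i=1}^{m}\mathcal{N}_{V-X_i}\,dV = \alpha\prod_{i<j}\mathcal{N}_{X_i-X_j}$, which has $\binom{m}{2}$ factors --- one when $m=2$, but three when $m=3$ and six when $m=4$. So marginalization can \emph{increase} the number of product terms, and the per-PPDF linear bound does not propagate through the inductive step as written. To rescue the argument you need an extra invariant, e.g.\ that at the moment a continuous variable is marginalized out it occurs in at most two Gaussian factors (true in the Kalman-filter and mixture-model derivations, but not forced by the rules), or that Gaussian factors over identical linear forms are merged after integration. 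A smaller repair of the same kind is needed for claim 2: a CONS step always adds one constraint to every $C_i$ but need not add any fresh variable to $V_c$ (several constraints may involve the same variables), so ``at most one constraint per continuous variable'' requires either eliminating redundant or unsatisfiable constraints or bounding the number of CONS steps rather than counting variables.
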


The number of product terms and the size of constraint sets are hence
independent of the length of the symbolic derivation.  Note that for a
program with only discrete random variables, there may be
exponentially fewer symbolic derivations than concrete derivations.
The compactness is only in terms of \emph{number}
of derivations and not the total size of the representations.  In
fact, for programs with only discrete random variables, there is a
one-to-one correspondence between the entries in the tabular
representation of success functions and PRISM's answer tables.  For
such programs, it is easy to show that the time complexity of the
inference algorithm presented in this paper is same as that of PRISM.

\paragraph{Correctness of the Inference Algorithm:}

The technically complex aspect of correctness is the closure of the set of success functions w.r.t. join and marginalize operations. Proposition \ref{integration} and \ref{opclosure} state these closure properties.
Definition \ref{sfder} represents the inference algorithm for computing the success function of a goal. 
The distribution of a goal is formally defined in terms of the distribution semantics of extended PRISM programs and is computed using the inference algorithm. 

\begin{Thm}
The success function of a goal computed by the inference algorithm represents the distribution of the answer to that goal.
\end{Thm}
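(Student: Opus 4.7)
The plan is to prove the theorem by structural induction on the symbolic derivation of $G$, with the distribution semantics of extended PRISM programs (as sketched in the ``Distribution Semantics'' paragraph) as the semantic reference point. Concretely, I would first define, for each goal $G$ in a derivation, a semantic object $\mu_G$: the measure on valuations of $V(G) = V_c(G) \cup V_d(G)$ induced by (i)~the Lebesgue/counting measure on all \texttt{msw} outcomes reachable from $G$, restricted to samples under which $G$ succeeds, and (ii)~pushed forward onto $V(G)$. The theorem then amounts to the statement $\psi_G = \mu_G$ pointwise (interpreting a success function as a density against the product of Lebesgue and counting measures, with $\delta_x$ selecting discrete atoms).

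For the base cases, I would verify directly from the ``Success functions of base predicates'' paragraph that $\psi_{\mathtt{msw}(rv(\overline{X}),Y)}$, $\psi_{\id{Constr}}$, and $\psi_{\id{true}}$ agree with their semantic distributions --- the first by definition of $rv$'s PDF/PMF, the second because a satisfiable linear equality defines a measure-one surface encoded by the pair $\langle 1, C\rangle$, and the third trivially. For the inductive step I would take the three cases of Definition~\ref{sfder} in turn. For the MSW case, the PRISM independence assumption (invoked explicitly at the end of Definition~\ref{sfder}) means the joint density factors, and hence the \emph{join} operation computes $\mu_G$ from $\mu_{msw}$ and $\mu_{G'}$. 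The CONS case is analogous, with the constraint contributing an indicator/delta factor. For the PCR case, I would argue two things: (a)~that the answers obtained via different clauses are mutually exclusive (PRISM's exclusiveness assumption), so $\mu_G$ is the sum $\sum_{G'} \mu_{G \to G'}$; and (b)~that passing from $G'$ to $G$ requires integrating out precisely the variables in $V(G') \setminus V(G)$, which is exactly what $\marginalize$ is defined to do.

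The main obstacle will be step (b): proving that $\marginalize$, which first \emph{projects} by substituting a linear expression $\overline{a}\cdot\overline{X}+b$ for $V$ and then \emph{integrates} (Proposition~\ref{integration}), agrees with measure-theoretic marginalization. I would handle this by splitting on whether the constraint set $C_i$ contains a linear equality involving $V$. If it does, projection corresponds to applying the coarea/change-of-variables formula on the hyperplane $V = \overline{a}\cdot\overline{X}+b$, eliminating the Dirac factor that the equality implicitly carries; I would check that the Jacobian is trivial because the constraint is affine with unit coefficient on $V$, so no correction term is required. If no such constraint exists, projection is the identity and integration is literal Lebesgue integration, whose closure and correctness are given by Proposition~\ref{integration}. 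Combined with Proposition~\ref{opclosure} (closure under join and marginalize), these two sub-arguments keep the induction inside the class of success functions, closing the proof. The discrete case of marginalize and the delta-simplification step noted after Example~\ref{join-ex} are routine verifications and would be handled uniformly by reading $\delta$'s as counting-measure atoms.
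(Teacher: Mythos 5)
Your proposal follows essentially the same route as the paper's own proof: induction on the derivation, with base predicates as the base case and the inductive step split into the PCR case (justified by PRISM's exclusiveness assumption and the \emph{marginalize} operation) and the MSW/CONS cases (justified by the independence assumption and the \emph{join} operation). The paper's argument is only a sketch precisely at the point you identify as the main obstacle---that projection followed by integration coincides with measure-theoretic marginalization---so your explicit semantic object $\mu_G$ and the affine change-of-variables check are a more detailed rendering of the same proof rather than a different one.
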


\paragraph{Proof:}
Correctness w.r.t. distribution semantics follows from the definition of join and marginalize operations, and PRISM's independence and exclusiveness assumptions.
We prove this by induction on derivation length $n$.
For $n=1$, the definition of success function for base predicates gives a correct distribution. 

Now let's assume that for a derivation of length $n$, our inference algorithm computes valid distribution.
Let's assume that $G'$ has a derivation of length $n$ and $G \rightarrow G'$. Thus $G$ has a derivation of length $n+1$. We show that the success function of $G$ represents a valid distribution. 

We compute $\psi_{G}$ using Definition \ref{sfder} 
and it carries PRISM's assumption that an 
instance of a random variable occurs at most once in any derivation.
More specifically, the PCR step marginalizes $\psi_{G'}$ w.r.t. a
set of variables $V(G') - V(G)$. Since according to PRISM's exclusiveness 
assumption the valuations of the set of variables are
mutually exclusive, the marginalization operation returns a valid distribution. 
Analogously, the MSW/CONS step joins success functions, and 
the goals joined use independent random
variables (following PRISM's assumption) for the join operation to correctly compute 
$\psi_{G}$ in this step. Thus $\psi_{G}$ represents a valid distribution.


\comment{ Summation entries are introduced to $\psi_{g}$ whenever we
  do a marginalization over a discrete variable, or join of two
  success functions (where we multiply two entries and get a number of
  summation entries). Since every derivation step can involve either
  of these two operations, size of summation entries is then
  proportional to the size of derivation.

Let, there are $n$ continuous variables in goal $g$, then $D_{i}(\psi_{g} \langle e \rangle)$ will contain approximately $n$ Gaussians. If it contains more than $n$ Gaussians, then some continuous variable $V$ has more than one Gaussian, and multiplying those will yield a single Gaussian according to Gaussian functions properties.

Rough Proof sketch (by induction). Let, in the base case, $q$ is defined using only some constraint or other non-predicates. Then, the size of constraints set of $q$ is equal to the total number of constraints appeared in the definition of $q$.

Let's assume that $q$ is defined in terms of predicates $r$, and $s$. ($e.g., q(X):-r(X,Y),s(X,Z)$). As an induction hypothesis, let's assume that the size of constraint sets in summation entries of $\psi_{r}$ and $\psi_{s}$ are proportional to the program size of $r$ and $s$.
Since projection or integration operation will always reduce the constraints set size, the number of constraints in $q$ after marginalizing over the remaining variables will be at most the combined program size of $r$ and $s$.
}

\comment{
Notice that, success function is similar to PRIMS's inside probability ~\cite{sato} of a goal.
In our formulation, success functions are represented by probability tables,
and computation behind success function lies in table joining and marginalization.

\subsection{CALL FUNCTION}
We associate a function with each goal's `Call' in a derivation, which denotes the probability that the goal will be called given the derivation context (i.e., variable bindings). Call function corresponds to prior or filtered distribution of random variables in a goal. It is similar in notion to PRISM's outside probability ~\cite{sato} of a goal.

Call function computation is a top-down procedure, and we compute call
functions with the help of magic-sets \cite{magicset}. Any logic
program can be transformed to its corresponding magic-sets, and call
function of a goal in original program is same as success function of
that goal in magic-sets. We denote the call function of goal
$g(\overline{X_{g}})$ by $\psi_{g}^{m}(\overline{X_{g}})$ where the
superscript $m$ means that it's the success function of $g$ in
magic-sets. Next section describes call and success function
computation with an illustrative example.
}

\comment{
\subsection{PROBABILISTIC PREDICATES}

\textbf{Probabilistic switch, \emph{msw}.}\\
We introduce a probabilistic switch in logic programs which is same as PRISM's \emph{msw} except that it introduces continuous random variable definitions. In general, $msw(\rv(\overline{X}),Y)$ denotes a probabilistic switch where $\overline{X}$ is a vector of discrete variables, and $Y$ is either a discrete or continuous variable.

\textbf{Constraints, \emph{Cnstrnt}}\\
In addition to probabilistic switches, programs also contain constraints over continuous random variables. Constraints can be either in one of the following three forms: $Y=\overline{A}'\cdot\overline{X}+B$, or $Y > v$, or $Y < v$.  Here, the first constraint denotes a linear function ($Y=\overline{A}'\cdot\overline{X}+B$) where $\overline{A}$ and $B$ are constants, and the last two constraints denote comparisons with constants (e.g., $V > 1.5$). We refer to those constraints as $Cnstrnt$ in this paper.
}

\comment{
\subsection{SYMBOLIC DERIVATION}

We use the notion of \emph{symbolic derivation} for derivations involving random variables.
Note that, a derivation step $G_{1} \rightarrow G_{2}$ (where $G_{1}$ and $G_{2}$ are logic goals) can be any of the following three types.

\textbf{Derivation Type 1}
\begin{figure}[H]
\begin{displaymath}
\tiny{
  \xymatrix @=10pt{
    q_{1}(\overline{X_{1}}), q_{2}(\overline{X_{2}}),\ldots, q_{N}(\overline{X_{N}}). \ar[d]_\theta    \\
    (r_{11}(\overline{Y_{11}}),\ldots,r_{1M}(\overline{Y_{M}}), q_{2}(\overline{X_{2}}),\ldots, q_{N}(\overline{X_{N}}))_{\theta}.
  }
}
\end{displaymath}
\caption{Derivation type 1}
\label{der1}
\end{figure}

Here $q_{1}$ unifies with $q$ such that there is a predicate definition $q :- r_{11},\ldots,r_{1M}$, and $\theta = MGU(q_{1},q)$. MGU refers to the most general unifier ~\cite{aibook}.

For this type of derivation, no new variable type information is obtained. So, $D_{G_{1}} = D_{G_{2}}$.

\textbf{Derivation Type 2}\\
The second type of derivation involves a probabilistic switch (msw), and it particularly signifies the meaning of symbolic derivations.  Here, goal $G_{1}$ has a $msw$ definition, and instead of choosing multiple paths for each value of $Y$, the derivation has a single path leading to goal $G_{2}$. It means after calling $msw(rv(\overline{X}), Y)$, variable $Y$ symbolically contains all the values along with their distributions (pmf/pdf) from this program point.

\begin{figure}[H]
\begin{displaymath}
\tiny{
  \xymatrix @=10pt{
    msw(rv(\overline{X}),Y), q_{2}(\overline{X_{2}}),\ldots, q_{N}(\overline{X_{N}}). \ar[d]   \\
    q_{2}(\overline{X_{2}}),\ldots, q_{N}(\overline{X_{N}}).
  }
}
\end{displaymath}
\caption{Derivation type 2}
\label{der2}
\end{figure}

Note that, this derivation introduces variable type information because of the $msw$ definition. So, $D_{G_{1}} = D_{G_{2}} \cup \{\overline{X}:d, Y:d/c\}$, where according to the $msw$ definition $Y$ is either discrete or continuous and $\overline{X}$ is discrete.

\textbf{Derivation Type 3}\\
The third type of derivation involves a constraint (i.e., a linear function or comparison).

\begin{figure}[H]
\begin{displaymath}
\tiny{
  \xymatrix @=10pt{
    Cnstrnt,  q_{2}(\overline{X_{2}}),\ldots, q_{N}(\overline{X_{N}}). \ar[d]   \\
    q_{2}(\overline{X_{2}}),\ldots, q_{N}(\overline{X_{N}}).
  }
}
\end{displaymath}
\caption{Derivation type 3}
\label{der3}
\end{figure}

Since constraints are imposed only on continuous variables, this derivation also introduces variable type information. So, $D_{G_{1}} = D_{G_{2}} \cup \{Y_{\in Cnstrnt}:c\}$

}


\section{Illustrative Example}
\label{sec:kalman}

\comment{
\begin{figure}
\small{
\begin{verbatim}
kf(N, T) :-
    msw(init, S),
    kf_part(0, N, S, T).

kf_part(I, N, S, T) :-
    I < N, NextI is I+1,
    trans(S, I, NextS),
    emit(NextS, NextI, V),
    obs(NextI, V),
    kf_part(NextI, N, NextS, T).
kf_part(I, N, S, T) :- I=N, T=S.

trans(S, I, NextS) :-
    msw(trans_err, I, E),
    NextS = S + E.

emit(NextS, NextI, V) :-
    msw(obs_err, NextI, X),
    V = NextS + X.
\end{verbatim}
  \caption{Logic program for Kalman Filter.}
  \label{kalman-filter}
}
\end{figure}
}

\comment{
\begin{figure}
\begin{minipage}[t]{3.0in}
\begin{center}
    \begin{displaymath}
    \tiny{
    \xymatrix @=8pt {
        G_{1}: \txt{ kf(1, T) } \ar[d] \\
        G_{2}: \txt{ msw(init, S), kf\_part(0, 1, S, T) } \ar[d]  \\
        G_{3}: \txt{ kf\_part(0, 1, S, T) } \ar[d] \\
        G_{4}: \txt{ 0 $<$ 1, NextI is 0+1, trans(S, NextS), \\ emit(NextS, V), obs(NextI, V), \\kf\_part(NextI, 1, NextS, T) } \ar[d] \\
        G_{5}: \txt{ NextI is 0+1, trans(S, NextS), \\ emit(NextS, V), obs(NextI, V), \\ kf\_part(NextI, 1, NextS, T) } \ar[d]\\
        G_{6}: \txt{ trans(S, NextS), emit(NextS, V), \\ obs(1, V),  kf\_part(1, 1, NextS, T) } \ar[d] \\
        G_{7}: \txt{ msw(trans\_err, E), NextS = S + E, \\ emit(NextS, V), obs(1, V),  \\ kf\_part(1, 1, NextS, T) } \ar[d] \\
        G_{8}: \txt{ NextS = S + E, emit(NextS, V), \\ obs(1, V), kf\_part(1, 1, NextS, T) } \ar[d] \\
        G_{9}: \txt{ emit(NextS, V), obs(1, V), \\ kf\_part(1, 1, NextS, T) } \ar[d] \\
        G_{10}: \txt{ msw(obs\_err, X), V = NextS + X, \\ obs(1, V),  kf\_part(1, 1, NextS, T) } \ar[d]\\
        G_{11}: \txt{ V = NextS + X, obs(1, V), \\ kf\_part(1, 1, NextS, T) } \ar[d] \\
        G_{12}: \txt{ obs(1, V), kf\_part(1, 1, NextS, T) } \ar[d] \\
        G_{13}: \txt{ kf\_part(1, 1, NextS, T) } \ar[d] \\
        G_{14}: \txt{ 1 = 1, T = NextS }  \ar[d] \\
        G_{15}: \txt{ T = NextS }  \ar[d] \\
        \diamond
    }
    }
\end{displaymath}
\end{center}
\end{minipage}
  \caption{Symbolic Derivation of Kalman filter}
  \label{fig:kf-derivation}
 \end{figure}
}

\begin{wrapfigure}[19]{r}{0.47\textwidth}
  \begin{center}
    \begin{minipage}[t]{2in}
\small
\renewcommand{\baselinestretch}{0.9}
\begin{verbatim}
kf(N, T) :- 
  msw(init, S),
  kf_part(0, N, S, T).

kf_part(I, N, S, T) :- 
  I < N, NextI is I+1, 
  trans(S, I, NextS), 
  emit(NextS, NextI, V), 
  obs(NextI, V), 
  kf_part(NextI, N, NextS, T).

kf_part(I, N, S, T) :- 
  I=N, T=S.

trans(S, I, NextS) :- 
  msw(trans_err, I, E), 
  NextS = S + E.

emit(NextS, I, V) :- 
  msw(obs_err, I, X),
  V = NextS + X.
\end{verbatim}
    \end{minipage}
 \end{center}
  \caption{Logic program for Kalman Filter}
  \label{kalman-filter}
\vspace*{-5pt}
\end{wrapfigure}
In this section, we model Kalman filters~\cite{aibook} using logic
programs. The model describes a random walk of a single continuous
state variable $S_{t}$ with noisy observation $V_{t}$. The initial
state distribution is assumed to be Gaussian with mean $\mu_{0}$, and
variance $\sigma_{0}^{2}$.
 The transition and sensor models are 
Gaussian noises with zero means and constant variances $\sigma_{s}^{2}$, $\sigma_{v}^{2}$ respectively.

Fig.~\ref{kalman-filter} shows a logic program for Kalman filter, and
Fig.~\ref{fig:kf-derivation} shows the derivation for a query
$kf(1,T)$. Note the similarity between this and \texttt{hmm} program (Fig.~\ref{fig:hmm-prism}): only trans/emit definitions are different. We label the $i^{th}$ derivation step by $G_{i}$ which is used in the next subsection to refer to appropriate derivation step. Here, our goal is to compute filtered distribution of state $T$.

\paragraph{Success Function Computation:}

Fig.~\ref{fig:kf-derivation} shows the bottom-up success function computation.
Note that $\psi_{G_{12}}$ is same as $\psi_{G_{13}}$ except that
$obs(1, V)$ binds $V$ to an observation $v_{1}$.
Final step involves marginalization w.r.t. $S$,
\begin{align*}
\psi_{G_{1}} &= \textstyle \marginalize(\psi_{G_2}, S)\\
 &= \textstyle \mathcal{N}_{v_{1} - T}(0,\sigma_{v}^{2}).\mathcal{N}_{T}(\mu_{0},\sigma_{0}^{2}+\sigma_{s}^{2}).
 \textrm{       (using Equation~\ref{intgr-eq})}\\
 &= \textstyle \mathcal{N}_{T}(v_{1},\sigma_{v}^{2}).\mathcal{N}_{T}(\mu_{0},\sigma_{0}^{2}+\sigma_{s}^{2}).
 \textrm{          (constant shifting)}\\
 &= \textstyle \mathcal{N}_{T}\left(\frac{(\sigma_{0}^{2}+\sigma_{s}^{2})*v_{1} + \sigma_{v}^{2}*\mu_{0}}{\sigma_{0}^{2}+\sigma_{s}^{2}+\sigma_{v}^{2}}, \frac{(\sigma_{0}^{2}+\sigma_{s}^{2})*\sigma_{v}^{2}}{\sigma_{0}^{2}+\sigma_{s}^{2}+\sigma_{v}^{2}}\right).\\
 &\textrm{ (product of two Gaussian PDFs is another PDF)}
\end{align*}
which is the filtered distribution of state $T$ after seeing one observation, which is equal to the filtered distribution presented in~\cite{aibook}.

\begin{figure}
\setlength{\unitlength}{0.5cm}
\tiny{
\begin{picture}(25,28)
\put(0.0,27.5){ $G_{1}: \txt{ kf(1, T) } $} 
\put(0.0,25.5){ $ G_{2}: \txt{ msw(init, S), kf\_part(0, 1, S, T) } $}
\put(0.0, 23.5){ $ G_{3}: \txt{ kf\_part(0, 1, S, T) } $}
\put(0.0,21.5){ $ G_{4}: \txt{ 0 $<$ 1, NextI is 0+1, trans(S, NextS), \\   emit(NextS, V), obs(NextI, V), \\kf\_part(NextI, 1, NextS, T) } $}
\put(0.0,19.5){ $ G_{5}: \txt{ NextI is 0+1, trans(S, NextS), \\ emit(NextS, V), obs(NextI, V), \\ kf\_part(NextI, 1, NextS, T) }$}
\put(0.0,17.5){ $ G_{6}: \txt{ trans(S, NextS), emit(NextS, V), \\ obs(1, V),  kf\_part(1, 1, NextS, T) } $}
\put(0.0,15.5){ $ G_{7}: \txt{ msw(trans\_err, E), NextS = S + E, \\ emit(NextS, V), obs(1, V),  \\ kf\_part(1, 1, NextS, T) }  $}
\put(0.0,13.5){ $ G_{8}: \txt{ NextS = S + E, emit(NextS, V), \\ obs(1, V), kf\_part(1, 1, NextS, T) } $}
\put(0.0,11.5){ $ G_{9}: \txt{ emit(NextS, V), obs(1, V), \\ kf\_part(1, 1, NextS, T) } $}
\put(0.0,9.5){ $ G_{10}: \txt{ msw(obs\_err, X), V = NextS + X, \\ obs(1, V),  kf\_part(1, 1, NextS, T) } $}
\put(0.0,7.5){ $ G_{11}: \txt{ V = NextS + X, obs(1, V), \\ kf\_part(1, 1, NextS, T) }  $}
\put(0.0,5.7){ $ G_{12}: \txt{ obs(1, V), kf\_part(1, 1, NextS, T) }  $}
\put(0.0,4.2){ $ G_{13}: \txt{ kf\_part(1, 1, NextS, T) }  $}
\put(0.0,2.7){ $ G_{14}: \txt{ 1 = 1, T = NextS }  $}
\put(0.0,1.2){ $ G_{15}: \txt{ T = NextS } $}
\put(2.2,0.2){ $ \diamond  $}

\put(2.5, 27){\vector(0,-1){1}}
\put(2.5, 25){\vector(0,-1){1}}
\put(2.5, 23){\vector(0,-1){0.8}}
\put(2.5, 21){\vector(0,-1){0.8}}
\put(2.5, 19){\vector(0,-1){0.8}}
\put(2.5, 17){\vector(0,-1){1}}
\put(2.5, 15){\vector(0,-1){1}}
\put(2.5, 13){\vector(0,-1){1}}
\put(2.5, 11){\vector(0,-1){1}}
\put(2.5, 9){\vector(0,-1){1}}
\put(2.5, 7){\vector(0,-1){1}}
\put(2.5, 5.3){\vector(0,-1){0.7}}
\put(2.5, 3.8){\vector(0,-1){0.7}}
\put(2.5, 2.3){\vector(0,-1){0.7}}
\put(2.5, 1){\vector(0,-1){0.6}}

\put(12,27.5){ $\psi_{G_{1}}=\mathcal{N}_{T}\left(\frac{(\sigma_{0}^{2}+\sigma_{s}^{2})*v_{1} + \sigma_{v}^{2}*\mu_{0}}{\sigma_{0}^{2}+\sigma_{s}^{2}+\sigma_{v}^{2}}, \frac{(\sigma_{0}^{2}+\sigma_{s}^{2})*\sigma_{v}^{2}}{\sigma_{0}^{2}+\sigma_{s}^{2}+\sigma_{v}^{2}}\right) $}
\put(12,25.5){ $ \psi_{G_{2}} = \psi_{msw(init)} * \psi_{G_{3}} $}
\put(13.2,24.9){ $ = \mathcal{N}_{v_{1} - T}(0,\sigma_{v}^{2}).\mathcal{N}_{T - S}(0,\sigma_{s}^{2}).\mathcal{N}_{S}(\mu_{0},\sigma_{0}^{2})$}
\put(12,23.5){ $ \psi_{G_{3}} = \marginalize(\psi_{G_4}, NextS) $}
\put(13.2,22.9){ $ = \mathcal{N}_{v_{1} - T}(0,\sigma_{v}^{2}).\mathcal{N}_{T - S}(0,\sigma_{s}^{2})$}
\put(12,21.5){ $ \psi_{G_{4}} = \cpdf{\mathcal{N}_{v_{1} - NextS}(0,\sigma_{v}^{2}).\mathcal{N}_{NextS - S}(0,\sigma_{s}^{2})}{T = NextS} $}
\put(12,19.5){ $ \psi_{G_{5}} = \cpdf{\mathcal{N}_{v_{1} - NextS}(0,\sigma_{v}^{2}).\mathcal{N}_{NextS - S}(0,\sigma_{s}^{2})}{T = NextS} $}
\put(12,17.5){ $ \psi_{G_{6}} = \marginalize(\psi_{G_7}, E) $}
\put(13.2,16.9){ $ = \cpdf{\mathcal{N}_{v_{1} - NextS}(0,\sigma_{v}^{2}).\mathcal{N}_{NextS - S}(0,\sigma_{s}^{2})}{T = NextS} $}
\put(12,15.5){ $ \psi_{G_{7}} = \psi_{msw(trans\_err)} * \psi_{G_{8}} $}
\put(13.2,14.9){ $ =  \langle \mathcal{N}_{v_{1} - NextS}(0,\sigma_{v}^{2}).\mathcal{N}_{E}(0,\sigma_{s}^{2}),$}  
\put(13.5,14.2){ \textcolor{blue}{$T =NextS \wedge NextS = S + E$}$\rangle$ }
\put(12,13.5){ $ \psi_{G_{8}} = \psi_{NextS = S + E} * \psi_{G_{9}} $}
\put(13.2,12.9){ $ = \cpdf{\mathcal{N}_{v_{1} - NextS}(0,\sigma_{v}^{2})}{T = NextS \wedge NextS = S + E} $}
\put(12,11.5){ $ \psi_{G_{9}} = \marginalize(\psi_{G_{10}}, X) $}
\put(13.2,10.9){ $ = \cpdf{\mathcal{N}_{v_{1} - NextS}(0,\sigma_{v}^{2})}{T = NextS} $}
\put(12,9.5){ $ \psi_{G_{10}} =\psi_{msw(obs\_err)}*\psi_{G_{11}} $}
\put(13.4,8.9){ $ = \cpdf{\mathcal{N}_{X}(0,\sigma_{v}^{2})}{T = NextS \wedge v_{1} = NextS + X} $}
\put(12,7.5){ $ \psi_{G_{11}} = \psi_{v_{1} = NextS + X} * \psi_{G_{12}}  $}
\put(13.4,6.9){ $= \cpdf{1}{T = NextS \wedge v_{1} = NextS + X}  $}
\put(12,5.7){ $ \psi_{G_{12}} = \cpdf{1}{T = NextS} $}
\put(12,4.2){ $ \psi_{G_{13}} = \cpdf{1}{T = NextS} $}
\put(12,2.7){ $ \psi_{G_{14}} = \cpdf{1}{T = NextS} $}
\put(12,1.2){ $ \psi_{G_{15}} = \cpdf{1}{T = NextS} $}

\comment{
\put(16.5, 25.8){\vector(0,1){0.8}}
\put(16.5, 23.8){\vector(0,1){0.8}}
\put(16.5, 21.8){\vector(0,1){0.8}}
\put(16.5, 19.8){\vector(0,1){0.8}}
\put(16.5, 17.8){\vector(0,1){0.8}}
\put(16.5, 15.8){\vector(0,1){1}}
\put(16.5, 13.8){\vector(0,1){1}}
\put(16.5, 11.8){\vector(0,1){1}}
\put(16.5, 9.8){\vector(0,1){1}}
\put(16.5, 7.8){\vector(0,1){1}}
\put(16.5, 6.0){\vector(0,1){0.8}}
\put(16.5, 4.5){\vector(0,1){0.8}}
\put(16.5, 3){\vector(0,1){0.8}}
\put(16.5, 1.5){\vector(0,1){0.8}}
}
\end{picture}
}
\caption{Symbolic derivation and success functions for \texttt{kf(1,T)}}
\label{fig:kf-derivation}
\vspace{-3pt}
\end{figure}

\comment{
Using our definition of success functions, the success function of the
leaf goal in Fig.~\ref{fig:kf-derivation} ($G_{15}$) is $\psi_{G_{15}} = \cpdf{1}{T = NextS}$

$\psi_{G_{13}}$ and $\psi_{G_{14}}$ are same as $\psi_{G_{15}}$.

$\psi_{G_{12}}$ is same as $\psi_{G_{13}}$ except that
$obs(1, V)$ binds $V$ to an observation $v_{1}$. Thus, $\psi_{G_{11}}$ is $\psi_{v_{1} = NextS + X} * \psi_{G_{12}}$ which yields 
\begin{equation*}
\psi_{G_{11}} = \cpdf{1}{T = NextS \wedge v_{1} = NextS + X}.
\end{equation*}
Now $\psi_{G_{10}}$ is $\psi_{msw(obs\_err)}*\psi_{G_{11}}$ which gives
\begin{equation*}
\psi_{G_{10}} = \cpdf{\mathcal{N}_{X}(0,\sigma_{v}^{2})}{T = NextS \wedge v_{1} = NextS + X}.
\end{equation*}
Marginalizing $\psi_{G_{10}}$ over $X$ yields $\psi_{G_{9}}$. Thus,
\begin{align*}
\psi_{G_{9}} = \marginalize(\psi_{G_{10}}, X) 
= \cpdf{\mathcal{N}_{v_{1} - NextS}(0,\sigma_{v}^{2})}{T = NextS}.
\end{align*}
Similarly,
\begin{align*}
\psi_{G_{8}} &= \psi_{NextS = S + E} * \psi_{G_{9}}
 = \cpdf{\mathcal{N}_{v_{1} - NextS}(0,\sigma_{v}^{2})}{T = NextS \wedge NextS = S + E}.\\
\psi_{G_{7}} &= \psi_{msw(trans\_err)} * \psi_{G_{8}}
 =  \cpdf{\mathcal{N}_{v_{1} - NextS}(0,\sigma_{v}^{2}).\mathcal{N}_{E}(0,\sigma_{s}^{2})}  
  {T =NextS \wedge NextS = S + E}.\\
\psi_{G_{6}} &= \marginalize(\psi_{G_7}, E) 
= \cpdf{\mathcal{N}_{v_{1} - NextS}(0,\sigma_{v}^{2}).\mathcal{N}_{NextS - S}(0,\sigma_{s}^{2})}{T = NextS}.
\end{align*}
$\psi_{G_{4}}$ and $\psi_{G_{5}}$ are same as $\psi_{G_{6}}$.
Next,
\begin{align*}
\psi_{G_{3}} &= \marginalize(\psi_{G_4}, NextS)
= \mathcal{N}_{v_{1} - T}(0,\sigma_{v}^{2}).\mathcal{N}_{T - S}(0,\sigma_{s}^{2}).\\
\psi_{G_{2}} &= \psi_{msw(init)} * \psi_{G_{3}}
 = \mathcal{N}_{v_{1} - T}(0,\sigma_{v}^{2}).\mathcal{N}_{T - S}(0,\sigma_{s}^{2}).\mathcal{N}_{S}(\mu_{0},\sigma_{0}^{2}).
\end{align*}
Finally,
\begin{align*}
\psi_{G_{1}} &= \marginalize(\psi_{G_2}, S)\\ 
 &= \mathcal{N}_{v_{1} - T}(0,\sigma_{v}^{2}).\mathcal{N}_{T}(\mu_{0},\sigma_{0}^{2}+\sigma_{s}^{2}).
 \textrm{       (using Equation~\ref{intgr-eq})}\\
 &= \mathcal{N}_{T}(v_{1},\sigma_{v}^{2}).\mathcal{N}_{T}(\mu_{0},\sigma_{0}^{2}+\sigma_{s}^{2}).
 \textrm{          (constant shifting)}\\
 &= \mathcal{N}_{T}\left(\frac{(\sigma_{0}^{2}+\sigma_{s}^{2})*v_{1} + \sigma_{v}^{2}*\mu_{0}}{\sigma_{0}^{2}+\sigma_{s}^{2}+\sigma_{v}^{2}}, \frac{(\sigma_{0}^{2}+\sigma_{s}^{2})*\sigma_{v}^{2}}{\sigma_{0}^{2}+\sigma_{s}^{2}+\sigma_{v}^{2}}\right).\\
 &\textrm{ (product of two Gaussian PDFs is another PDF)}\\
\end{align*}
which is the filtered distribution of state $T$ after seeing one observation, which is equal to the filtered distribution presented in~\cite{aibook}.
}


\section{Discussion and Concluding Remarks}
\label{sec:extn}

ProbLog and PITA~\cite{RiguzziSwift10b}, an implementation of LPAD, lift PRISM's mutual exclusion and independence restrictions by using a BDD-based representation of explanations. 
\comment{ Their inference technique first materializes the set of explanations for each query, and represents this set as a BDD, where each node in the BDD is a (discrete) random variable.  Distinct paths in the BDD are mutually exclusive and variables in a single path are all independent. Probabilities of query answers are computed with a simple dynamic programming algorithm using this BDD representation.}
The technical development in this paper is based on PRISM and imposes PRISM's restrictions.  However, we can remove these restrictions by using the following approach.  In the first step, we materialize the set of symbolic derivations.   In the second step, we can \emph{factor} the derivations into a form analogous to BDDs such that random variables each path of the factored representation are independent, and distinct paths in the representation are mutually exclusive.  For instance, consider two non-exclusive branches in a symbolic derivation tree, one of which has \texttt{msw(r, X)} and the other that has \texttt{msw(s,Y)}.  This will be factored such that one of the two, say \texttt{msw(r,X')} is done in common, with two branches: $X=X'$ and $X\not=X'$.  The branch containing subgoal \texttt{msw(s,Y)} is ``and-ed'' with the $X=X'$ branch, and replicated as the $X\not=X'$ branch, analogous to how BDDs are processed.  The factored representation itself can be treated as symbolic derivations augmented with dis-equality constraints (i.e. of the form $X\not= e$).  Note that the success function of an equality constraint $C$ is $\cpdf{1}{C}$.  The success function of a dis-equality constraint $X\not= e$ is $\cpdf{1}{\id{true}} - \cpdf{1}{X=e}$, which is representable by extending our language of success functions to permit non-negative constants.  The definitions of join and marginalize operations work with no change over the extended success functions, and the closure properties (Prop.~\ref{opclosure}) holds as well.  Hence, success functions can be readily computed over the factored representation.  A detailed discussion of this extension appears in~\cite{AsifulIslam2012}.

Note that the success function of a goal represents the likelihood of a successful derivation for each instance of a goal.  Hence the probability measure computed by the success function is what PRISM calls \emph{inside} probability.  Analogously, we can define a function that represents the likelihood that a goal $G'$ will be encountered in a symbolic derivation starting at goal $G$. This ``call'' function will represent the \emph{outside} probability of PRISM.  Alternatively, we can use the Magic Sets transformation~\cite{magicset} to compute call functions of a program in terms of success functions of a transformed program.  The ability to compute inside and outside probabilities can be used to infer smoothed distributions for temporal models.

For simplicity, in this paper we focused  only on univariate
Gaussians. However, the techniques can be easily extended
to support multivariate Gaussian distributions, by extending the
integration function (Defn. ~\ref{intgop}), and \texttt{set\_sw} directives.
We can also readily extend them to support Gamma distributions.  More generally,
the PDF functions can be generalized to contain
Gaussian or Gamma density functions, such that
variables are not shared between Gaussian and Gamma
density functions.  Again, the only change is
to extend the integration function to handle PDFs of Gamma distribution.

The concept of symbolic derivations and success functions can be applied to parameter learning as well.  We have developed an EM-based learning algorithm which permits us to learn the distribution parameters of extended PRISM programs with discrete as well as Gaussian random variables~\cite{contdist-learning}. Similar to inference, our learning algorithm uses the \emph{symbolic derivation} procedure to compute Expected Sufficient Statistics (ESS). The E-step of the learning algorithm involves computation of the ESSs of the random variables and the M-step computes the MLE of the distribution parameters given the ESS and success probabilities.  Analogous to the inference algorithm presented in this paper, our learning algorithm specializes to PRISM's learning over programs without any continuous variables.  For mixture model, the learning algorithm does the same computation as standard EM learning algorithm~\cite{bishop}.

The symbolic inference and learning procedures enable us to reason over 
a large class of statistical models such as hybrid Bayesian networks with discrete child-discrete parent, continuous child-discrete parent (finite mixture model), and continuous child-continuous parent (Kalman filter), which was hitherto not possible in PLP frameworks.  It can also be used for  hybrid models, e.g., models that mix discrete and Gaussian distributions.  For instance, consider the mixture model example where \texttt{st(a)} is Gaussian but \texttt{st(b)} is a discrete distribution with values $1$ and $2$ with $0.5$ probability each. The density of the mixture distribution can be written as 
$
f(Z) = 0.3 \mathcal{N}_{Z}(2.0, 1.0)  + 0.35  \delta_{1.0}(Z)  + 0.35  \delta_{2.0}(Z).
$
Thus the language can be used to model problems that lie outside traditional hybrid Bayesian networks.

We implemented the extended inference algorithm presented in this
paper in the XSB logic programming system~\cite{XSB}.  The system is
available at
\url{http://www.cs.sunysb.edu/~cram/contdist}. This
proof-of-concept prototype is implemented as a meta-interpreter and
currently supports discrete and Gaussian distributions. The meaning of
various probabilistic predicates (e.g., \texttt{msw, values, set\_sw})
in the system are similar to that of PRISM system.  This
implementation illustrates how the inference algorithm specializes to
the specialized techniques that have been developed for several
popular statistical models such as HMM, FMM, Hybrid Bayesian Networks
and Kalman Filters. Integration of the inference
algorithm in XSB and its performance evaluation are topics of future work.
\paragraph{Acknowledgments.}
We thank the reviewers for valuable comments.
This research  was supported in part by 
NSF Grants CCF-\mbox{1018459}, 
CCF-\mbox{0831298}, 
and ONR Grant N00014-07-1-0928. 


\section{Appendix}
\label{sec:ppdfint}

This section presents proof of Proposition~\ref{integration}.

\begin{Prop}
\label{PPDF-integration}
Integrated form of a PPDF function with respect to a variable $V$ is a PPDF function, i.e., 
\begin{align*}
\int^{\infty}_{-\infty} \prod_{k=1}^{m}
\mathcal{N}_{(\overline{a_{k}} \cdot
  \overline{X_{k}}+b_{k})}(\mu_{k},\sigma^{2}_{k}) dV = \alpha \prod_{l=1}^{m'}
\mathcal{N}_{(\overline{a'_{l}} \cdot
  \overline{X'_{l}})+b'_{l}}(\mu'_{l},\sigma'^{2}_{l})
\end{align*}
where $V \in \overline{X_{k}}$ and $V \notin \overline{X'_{l}}$.
\end{Prop}

\noindent(Proof)\\
The above proposition states that integrated form of a product of Gaussian PDF functions with respect to a variable is a product of Gaussian PDF functions. We first prove it for a simple case involving two standard Gaussian PDF functions, and then generalize it for arbitrary number of Gaussians.

For simplicity, let us first compute the integrated-form of $\mathcal{N}_{V-X_{1}}(0, 1) . \mathcal{N}_{V-X_{2}}(0, 1)$ w.r.t. variable $V$ where $X_{1}, X_{2}$ are linear combination of variables (except $V$).
We make the following two assumptions: \\
1. The coefficient of $V$ is $1$ in both PDFs. \\
2. Both PDFs are standard normal distributions (i.e., $\mu=0$ and $\sigma^{2} = 1)$.

Let $\phi$ denote the integrated form, i.e.,
\begin{align*}
 \phi &= \int^{\infty}_{-\infty}  \mathcal{N}_{V-X_{1}}(0, 1) . \mathcal{N}_{V-X_{2}}(0, 1) dV \\
    &= \int^{\infty}_{-\infty}  \frac{1}{\sqrt{2\pi}}\exp^{-\frac{(V - X_{1})^2}{2}} . \frac{1}{\sqrt{2\pi}}\exp^{-\frac{(V - X_{2})^2}{2}} dV \\
     &= \int^{\infty}_{-\infty}  \frac{1}{2\pi}\exp^{-\frac{1}{2}[(V - X_{1})^2 + (V - X_{2})^2]} dV\\
     &= \int^{\infty}_{-\infty} \frac{1}{2\pi}\exp^{-\frac{1}{2}\eta} dV
\end{align*}
Now 
 \begin{align*}
 \eta &= (V - X_{1})^2 + (V - X_{2})^2 \\
   &= 2.V^{2} - 2.V.(X_{1} + X_{2}) + (X_{1}^{2} + X_{2}^{2}) \\
   &= 2 [(V - \frac{X_{1} + X_{2}}{2})^{2} + (\frac{X_{1}^{2} + X_{2}^{2}}{2}) - (\frac{X_{1} + X_{2}}{2}) ^ {2}] \\
   &= 2 [(V - \frac{X_{1} + X_{2}}{2})^{2} + g] 
  \end{align*}
where 
 \begin{align*}
 g = (\frac{X_{1}^{2} + X_{2}^{2}}{2}) - (\frac{X_{1} + X_{2}}{2}) ^ {2} 
  = \frac{1}{4}(X_{1} - X_{2})^{2}
  \end{align*}
Thus the integrated form can be expressed as 
\begin{align*}
 \phi  &= \int^{\infty}_{-\infty}  \frac{1}{2\pi}\exp^{-\frac{1}{2}. 2 [(V - \frac{X_{1} + X_{2}}{2})^{2} + g] } dV \\
  &= \int^{\infty}_{-\infty}  \frac{1}{2\pi}\exp^{-\frac{1}{2}. 2.(V - \frac{X_{1} + X_{2}}{2})^{2}}. \exp^{-\frac{1}{2}. 2.g}  dV \\
   &=  \frac{1}{2\sqrt{\pi}} exp^{- g}  \int^{\infty}_{-\infty}  \frac{1}{\sqrt{2.\pi.\frac{1}{2}}}\exp^{-\frac{(V - \frac{X_{1} + X_{2}}{2})^{2}}{2.\frac{1}{2}}}  dV \\
   &=  \frac{1}{2\sqrt{\pi}} exp^{- g} \texttt{   (as integration over the whole area is 1)} \\
   &=  \frac{1}{\sqrt{2\pi.2}} exp^{- \frac{1}{4}(X_{1} - X_{2})^{2}} \\
   &=  \mathcal{N}_{X_{1} - X_{2}}(0, 2) 
\end{align*}
 Thus integrated form of a PPDF function is another PPDF function. 
Notice that the integrated form is a constant when $X1 = X2$.

\paragraph{Generalization for arbitrary number of PDFs.} 
Note that for any arbitrary number of PDFs in a PPDF function, 
$\eta = \sum (V - X_i)^{2}$ can be always written as $k[(V - \beta)^2 + g_{n}]$, where 
\begin{align*}
 g_{n} &= \frac{1}{n}\sum_{i=1}^{n} X_{i}^{2}  - \frac{1}{n^2}(\sum_{i=1}^{n} X_{i})^{2}  
  \end{align*}
For any arbitrary number of PDFs, we will prove the property on $g_{n}$. 
In other words, we will show that $g_{n}$ can be expressed as
 \begin{equation}
 \label{gn}
 g_{n} = \frac{1}{n}\sum_{i=1}^{n} X_{i}^{2}  - \frac{1}{n^2}(\sum_{i=1}^{n} X_{i})^{2} 
 = \frac{1}{n^{2}}\sum_{i \ne j, i < j} (X_{i} - X_{j})^{2}  
  \end{equation}
which means integrated form of $n$ PDFs, 
\begin{align*}
\phi_{n} =  \int^{\infty}_{-\infty} \prod_{i=1}^{n} \mathcal{N}_{V-X_{i}}(0, 1) dV
\end{align*}
 can be expressed as 
 \begin{align*}
 \phi_{n} = \alpha \exp^{-g_{n}} = \alpha  \prod_{i \ne j, i < j} \mathcal{N}_{X_{i} - X_{j}} 
  \end{align*}

\begin{Pro}
\label{fn}
 Let $f_{n}= \sum_{i=1}^{n} X_{i}$. Then, 
 $f_{n}^{2}= \sum_{i=1}^{n} X_{i}^{2} +  \sum_{i \ne j} X_{i} X_{j}$.
\end{Pro}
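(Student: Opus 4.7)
The statement is a basic algebraic identity: the square of a finite sum equals the sum of squares plus the sum of all cross-products. My plan is to prove it by direct expansion of the product, with no induction needed.

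First I would write
\[
f_n^2 \;=\; \Bigl(\sum_{i=1}^{n} X_i\Bigr)\Bigl(\sum_{j=1}^{n} X_j\Bigr) \;=\; \sum_{i=1}^{n}\sum_{j=1}^{n} X_i X_j,
\]
using distributivity of multiplication over addition. Then I would partition the index set of the double sum into the diagonal $\{(i,j) : i = j\}$ and the off-diagonal $\{(i,j) : i \neq j\}$. The diagonal contribution is $\sum_{i=1}^{n} X_i^2$, and the off-diagonal contribution is, by definition, $\sum_{i \neq j} X_i X_j$. Adding these two pieces gives the desired identity.

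As a sanity check, I would optionally note the symmetric form $\sum_{i\neq j} X_i X_j = 2\sum_{i<j} X_i X_j$, which is the representation actually used in Equation (\ref{gn}) of the preceding argument; this confirms the shape needed by the enclosing proof of Proposition~\ref{PPDF-integration}.

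There is no real obstacle here --- the identity is a one-line consequence of distributivity and a reindexing of the double sum. The only thing to be careful about is making the index-set partition explicit, so that the reader sees unambiguously that every pair $(i,j)$ with $1 \le i,j \le n$ is counted exactly once across the two resulting sums.
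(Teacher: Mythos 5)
Your proof is correct, but it takes a different route from the paper. The paper proves this identity by induction on $n$: it assumes the expansion holds for $n$ variables, writes $f_{n+1}^2 = \bigl(\sum_{i=1}^{n} X_i + X_{n+1}\bigr)^2$, expands the square of the binomial, invokes the induction hypothesis on the first term, and regroups the cross terms $2(X_1 + \cdots + X_n)X_{n+1}$ into the off-diagonal sum. Your argument instead expands $f_n^2$ directly as the double sum $\sum_{i=1}^{n}\sum_{j=1}^{n} X_i X_j$ and partitions the index set into the diagonal $i=j$ and the off-diagonal $i\neq j$. The direct expansion is shorter and avoids the base-case and hypothesis bookkeeping entirely; the induction in the paper is presumably chosen for uniformity with the neighboring propositions on $g_n$ and $h_n$, which are also proved inductively and genuinely benefit from that structure. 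Your closing remark that $\sum_{i\neq j} X_i X_j = 2\sum_{i<j} X_i X_j$ is the right observation to connect this identity to the form used later for $g_n$. Either proof is acceptable; yours is arguably the cleaner one for this particular lemma.
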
 
\begin{proof}
 We prove the proposition using induction.
 Let us assume that the above equation holds for  $n$ variables.
 Now for $(n+1)^{th}$ variable $X_{n+1}$,
 \begin{align*}
 f_{n+1}^{2} &= (\sum_{i=1}^{n+1} X_{i})^{2} \\
  &= ((\sum_{i=1}^{n} X_{i}) + X_{n+1})^{2} \\
 &= (\sum_{i=1}^{n} X_{i})^{2} + X_{n+1}^{2} + 2(X_{1} + ... + X_{n})X_{n+1} \\ 
 &= \sum_{i=1}^{n} X_{i}^{2} + \sum_{i \ne j, i=1}^{n} X_{i} X_{j} +  X_{n+1}^{2} + 2(X_{1} + ... + X_{n})X_{n+1}  \\
  &\texttt{   (using induction hypothesis)}\\
 &= \sum_{i=1}^{n+1} X_{i}^{2} + \sum_{i \ne j} X_{i} X_{j}
  \end{align*} 
\end{proof}

Now going back to proving equation~\ref{gn},
we first show that $g_{n}$ can be written in the following form
 \begin{align*}
 g_{n} = \frac{1}{n}\sum_{i=1}^{n} X_{i}^{2}  - \frac{1}{n^2}(\sum_{i=1}^{n} X_{i})^{2} 
 = \frac{1}{n^{2}} [ (n-1) \sum_{i=1}^{n} X_{i}^{2}  - \sum_{i \ne j} X_{i} X_{j} ] 
  \end{align*}
 
 The above equation can be proved by induction.
It is easy to see that for $n=2$ the equation holds, as 
$g_{2} = \frac{1}{2}\sum_{i=1}^{2} X_{i}^{2}  - \frac{1}{4}(\sum_{i=1}^{2} X_{i})^{2} 
 = \frac{1}{4} [ X_{1}^{2} + X_{2}^{2} - X_{1} X_{2} - X_{2} X_{1} ] $.
Now 
 \begin{align*}
 g_{n+1} &= \frac{1}{(n+1)}\sum_{i=1}^{n+1} X_{i}^{2}  -  \frac{1}{(n+1)^{2}} f_{n+1}^{2}\\
  &= \frac{1}{(n+1)^{2}} [ (n+1) \sum_{i=1}^{n+1} X_{i}^{2} -  f_{n+1}^{2}] \\
 &= \frac{1}{(n+1)^{2}} [ (n+1) \sum_{i=1}^{n+1} X_{i}^{2} - \sum_{i=1}^{n+1} X_{i}^{2} - \sum_{i \ne j} X_{i} X_{j} ] \\
 &\texttt{   (using Proposition~\ref{fn})}\\
 &= \frac{1}{(n+1)^{2}} [ n \sum_{i=1}^{n+1} X_{i}^{2} - \sum_{i \ne j} X_{i} X_{j} ] 
  \end{align*} 

Thus $g_{n} = \frac{1}{n^{2}} [ (n-1) \sum_{i=1}^{n} X_{i}^{2}  - \sum_{i \ne j} X_{i} X_{j} ] $.
  
Finally, we will prove that
 \begin{align*}
 g_{n}  = \frac{1}{n^{2}} [ (n-1) \sum_{i=1}^{n} X_{i}^{2}  - \sum_{i \ne j} X_{i} X_{j} ] 
  = \frac{1}{n^{2}}\sum_{i \ne j, i < j} (X_{i} - X_{j})^{2} 
  \end{align*}

\begin{Pro}
Let $h_{n} = \sum_{i \ne j, i < j} (X_{i} - X_{j})^{2}$. Then
$h_{n} =  (n-1) \sum_{i=1}^{n} X_{i}^{2}  - \sum_{i \ne j} X_{i} X_{j}$.
\end{Pro}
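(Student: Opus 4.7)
The plan is to prove this by direct algebraic expansion; induction is unnecessary since the identity reduces to a simple combinatorial counting argument. The statement is purely about polynomial rearrangement with no probabilistic or analytic content.

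First, I would expand the summand using the elementary identity $(X_i - X_j)^2 = X_i^2 + X_j^2 - 2 X_i X_j$. Substituting into $h_n$ splits the sum into a square-term part $\sum_{i<j}(X_i^2 + X_j^2)$ and a cross-term part $-2\sum_{i<j} X_i X_j$.

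Next, I would handle each piece separately by a counting argument. For the square-term part, I would observe that for a fixed index $k \in \{1, \dots, n\}$, the monomial $X_k^2$ appears in the pair $(i,j)$ (with $i<j$) exactly when $k=i$ with $j\in\{k+1,\dots,n\}$ or when $k=j$ with $i\in\{1,\dots,k-1\}$, giving $(n-k)+(k-1) = n-1$ total occurrences. Hence $\sum_{i<j}(X_i^2 + X_j^2) = (n-1)\sum_{k=1}^{n} X_k^2$. For the cross-term part, I would use that the unordered-pair sum and the ordered-pair sum are related by $\sum_{i \ne j} X_i X_j = 2 \sum_{i<j} X_i X_j$, since each unordered pair $\{i,j\}$ corresponds to two ordered pairs $(i,j)$ and $(j,i)$ contributing the same product.

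Combining the two pieces yields $h_n = (n-1)\sum_{k=1}^{n} X_k^2 - \sum_{i \ne j} X_i X_j$, which is precisely the claimed formula. I do not anticipate any serious obstacle; the only thing to be careful about is the bookkeeping in the counting argument, specifically making sure the occurrences of $X_k^2$ are counted exactly once per unordered pair and that the factor of $2$ relating ordered and unordered cross-term sums is applied in the correct direction.
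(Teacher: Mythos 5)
Your proof is correct, but it takes a genuinely different route from the paper's. You argue by direct expansion and counting: writing $(X_i - X_j)^2 = X_i^2 + X_j^2 - 2X_iX_j$, observing that a fixed $X_k^2$ occurs in exactly $(n-k)+(k-1)=n-1$ of the unordered pairs, and converting the unordered cross-term sum to the ordered one via the factor of $2$. All three steps check out. The paper instead proceeds by induction on $n$, writing $h_{n+1} = h_n + \sum_{i=1}^{n}(X_i - X_{n+1})^2$ and expanding the new terms against the induction hypothesis (it does not even spell out a base case for this particular proposition). Your argument is more self-contained and makes the identity transparent as a purely combinatorial multiplicity count, with nothing to verify at $n=2$; the paper's induction has the advantage of matching the style of the neighboring propositions in the appendix (the identity for $f_n^2$ and the formula for $g_n$ are also established by peeling off the $(n+1)$-th variable), so the three inductive arguments share the same incremental bookkeeping. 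Either proof suffices for the role this lemma plays in establishing the closed form of $g_n$ and hence Proposition~\ref{integration}.
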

\begin{proof}
We use induction to prove the above proposition.
Let $h_{n}$ holds for $n$ variables. Then for  $(n+1)^{th}$ variable,
 \begin{align*}
h_{n+1} &= \sum_{i \ne j, i < j} (X_{i} - X_{j})^{2} \\
&= h_{n} +   \sum_{i=1}^{n} (X_{i} - X_{n+1})^{2}   \\
 &= (n-1) \sum_{i=1}^{n} X_{i}^{2}  - \sum_{i \ne j, i=1}^{n} X_{i} X_{j} + \sum_{i=1}^{n} X_{i}^{2} + nX_{n+1}^{2} 
  - 2\sum_{i=1}^{n} X_{i}X_{n+1}\\
&= n \sum_{i=1}^{n+1} X_{i}^{2}  - \sum_{i \ne j, i=1}^{n+1} X_{i} X_{j} 
  \end{align*}
\end{proof}

Thus $g_{n} = \frac{1}{n^{2}} h_{n} = \frac{1}{n^{2}}\sum_{i \ne j, i < j} (X_{i} - X_{j})^{2}$.
Thus $\phi_{n}$ can be expressed as 
 \begin{align*}
 \phi_{n} = \alpha \exp^{-g_{n}} = \alpha  \prod_{i \ne j, i < j} \mathcal{N}_{X_{i} - X_{j}} 
  \end{align*}

\paragraph{Integrated-form with arbitrary constants:}
For any arbitrary mean, variance and coefficients of $V$, 
\begin{align*}
 \phi_{2} &= \int^{\infty}_{-\infty} \mathcal{N}_{a_{1} V - X_{1}} (\mu_{1}, \sigma_{1}^{2}) . \mathcal{N}_{a_{2} V - X_{2}}(\mu_{2}, \sigma_{2}^{2}) dV\\
     &= \mathcal{N}_{a_{2} X_{1} - a_{1} X_{2}} (a_{1}\mu_{2} - a_{2} \mu_{1}, a_{2}^{2} \sigma_{1}^{2} + a_{1}^{2} \sigma_{2}^{2})
\end{align*}

And 
\begin{align*}
 \phi_{n} &= \alpha  \prod_{i \ne j, i < j} \mathcal{N}_{a_{j}X_{i} - a_{i}X_{j}} (a_{i}\mu_{j} - a_{j} \mu_{i}, \sigma_{ij}^{2})
\end{align*}

where 
\begin{align*}
 \sigma_{ij}^{2} &= \frac {\sum_{k=1}^{n}  a_{k}^{2} \prod_{l \ne k, l=1}^{n} \sigma_{l}^{2}} {\prod_{k=1, k \ne i,j}^{n} \sigma_{k}^{2}}
\end{align*}

Note that  the normalization constant is also adjusted  appropriately in the integrated form.

\bibliographystyle{acmtrans}
\renewcommand{\baselinestretch}{0.95}
\bibliography{biblio}

\begin{thebibliography}{}

\bibitem[\protect\citeauthoryear{Bancilhon, Maier, Sagiv, and Ullman}{Bancilhon
  et~al\mbox{.}}{1986}]{magicset}
{\sc Bancilhon, F.}, {\sc Maier, D.}, {\sc Sagiv, Y.}, {\sc and} {\sc Ullman,
  J.} 1986.
\newblock Magic sets and other strange ways to implement logic programs.
\newblock In {\em Proceedings of PODS}.

\bibitem[\protect\citeauthoryear{Bishop}{Bishop}{2006}]{bishop}
{\sc Bishop, C.} 2006.
\newblock {\em Pattern recognition and Machine Learning.}
\newblock Springer.

\bibitem[\protect\citeauthoryear{Chu, Popa, Tavakoli, Hellerstein, Levis,
  Shenker, and Stoica}{Chu et~al\mbox{.}}{2007}]{DSN}
{\sc Chu, D.}, {\sc Popa, L.}, {\sc Tavakoli, A.}, {\sc Hellerstein, J.~M.},
  {\sc Levis, P.}, {\sc Shenker, S.}, {\sc and} {\sc Stoica, I.} 2007.
\newblock The design and implementation of a declarative sensor network system.
\newblock In {\em SenSys}. 175--188.

\bibitem[\protect\citeauthoryear{Forney}{Forney}{1973}]{Viterbi}
{\sc Forney, G.} 1973.
\newblock The {Viterbi} algorithm.
\newblock In {\em Proceedings of the IEEE}. 268--278.

\bibitem[\protect\citeauthoryear{Friedman, Getoor, Koller, and
  Pfeffer}{Friedman et~al\mbox{.}}{1999}]{prm}
{\sc Friedman, N.}, {\sc Getoor, L.}, {\sc Koller, D.}, {\sc and} {\sc Pfeffer,
  A.} 1999.
\newblock Learning probabilistic relational models.
\newblock In {\em IJCAI}. 1300--1309.

\bibitem[\protect\citeauthoryear{Getoor and Taskar}{Getoor and
  Taskar}{2007}]{srlbook}
{\sc Getoor, L.} {\sc and} {\sc Taskar, B.} 2007.
\newblock {\em Introduction to Statistical Relational Learning}.
\newblock The MIT Press.

\bibitem[\protect\citeauthoryear{Goswami, Ortiz, and Das}{Goswami
  et~al\mbox{.}}{2011}]{WiGEM}
{\sc Goswami, A.}, {\sc Ortiz, L.~E.}, {\sc and} {\sc Das, S.~R.} 2011.
\newblock {WiGEM}: A learning-based approach for indoor localizatio.
\newblock In {\em SIGCOMM}.

\bibitem[\protect\citeauthoryear{Gutmann, Jaeger, and Raedt}{Gutmann
  et~al\mbox{.}}{2010}]{hproblog}
{\sc Gutmann, B.}, {\sc Jaeger, M.}, {\sc and} {\sc Raedt, L.~D.} 2010.
\newblock Extending {ProbLog} with continuous distributions.
\newblock In {\em Proceedings of ILP}.

\bibitem[\protect\citeauthoryear{Gutmann, Thon, Kimmig, Bruynooghe, and
  Raedt}{Gutmann et~al\mbox{.}}{2011}]{apprProblog}
{\sc Gutmann, B.}, {\sc Thon, I.}, {\sc Kimmig, A.}, {\sc Bruynooghe, M.}, {\sc
  and} {\sc Raedt, L.~D.} 2011.
\newblock The magic of logical inference in probabilistic programming.
\newblock {\em TPLP\/}~{\em 11,\/}~4-5, 663--680.

\bibitem[\protect\citeauthoryear{{Islam}, {Ramakrishnan}, and
  {Ramakrishnan}}{{Islam} et~al\mbox{.}}{2011}]{contdist-inference}
{\sc {Islam}, M.}, {\sc {Ramakrishnan}, C.~R.}, {\sc and} {\sc {Ramakrishnan},
  I.~V.} 2011.
\newblock {Inference in Probabilistic Logic Programs with Continuous Random
  Variables}.
\newblock {\em ArXiv e-prints\/}.
\newblock \url{http://arxiv.org/abs/1112.2681}.

\bibitem[\protect\citeauthoryear{{Islam}, {Ramakrishnan}, and
  {Ramakrishnan}}{{Islam} et~al\mbox{.}}{2012}]{contdist-learning}
{\sc {Islam}, M.}, {\sc {Ramakrishnan}, C.~R.}, {\sc and} {\sc {Ramakrishnan},
  I.~V.} 2012.
\newblock {Parameter Learning in PRISM Programs with Continuous Random
  Variables}.
\newblock {\em ArXiv e-prints\/}.
\newblock \url{http://arxiv.org/abs/1203.4287}.

\bibitem[\protect\citeauthoryear{Islam}{Islam}{2012}]{AsifulIslam2012}
{\sc Islam, M.~A.} 2012.
\newblock Inference and learning in probabilistic logic programs with
  continuous random variables.
\newblock Ph.D. thesis.
\newblock \url{http://www.cs.sunysb.edu/~cram/asiful2012.pdf}.

\bibitem[\protect\citeauthoryear{Jaffar, Maher, Marriott, and Stuckey}{Jaffar
  et~al\mbox{.}}{1998}]{JaffarCLP}
{\sc Jaffar, J.}, {\sc Maher, M.~J.}, {\sc Marriott, K.}, {\sc and} {\sc
  Stuckey, P.~J.} 1998.
\newblock The semantics of constraint logic programs.
\newblock {\em J. Log. Program.\/}~{\em 37,\/}~1-3, 1--46.

\bibitem[\protect\citeauthoryear{Kersting and Raedt}{Kersting and
  Raedt}{2000}]{blp}
{\sc Kersting, K.} {\sc and} {\sc Raedt, L.~D.} 2000.
\newblock Bayesian logic programs.
\newblock In {\em ILP Work-in-progress reports}.

\bibitem[\protect\citeauthoryear{Kersting and Raedt}{Kersting and
  Raedt}{2001}]{hblp}
{\sc Kersting, K.} {\sc and} {\sc Raedt, L.~D.} 2001.
\newblock Adaptive {Bayesian} logic programs.
\newblock In {\em ILP}.

\bibitem[\protect\citeauthoryear{Lari and Young}{Lari and
  Young}{1990}]{InsideOutside}
{\sc Lari, K.} {\sc and} {\sc Young, S.~J.} 1990.
\newblock The estimation of stochastic context-free grammars using the
  inside-outside algorithm.
\newblock {\em Computer Speech and Language\/}~{\em 4}, 3556.

\bibitem[\protect\citeauthoryear{Muggleton}{Muggleton}{1996}]{Muggleton}
{\sc Muggleton, S.} 1996.
\newblock Stochastic logic programs.
\newblock In {\em Adv. in inductive logic programming}.

\bibitem[\protect\citeauthoryear{Murphy}{Murphy}{1998}]{hbn}
{\sc Murphy, K.} 1998.
\newblock Inference and learning in hybrid {Bayesian} networks.
\newblock {\em Technical Report UCB/CSD-98-990\/}.

\bibitem[\protect\citeauthoryear{Narman, Buschle, Konig, and Johnson}{Narman
  et~al\mbox{.}}{2010}]{hprm}
{\sc Narman, P.}, {\sc Buschle, M.}, {\sc Konig, J.}, {\sc and} {\sc Johnson,
  P.} 2010.
\newblock Hybrid probabilistic relational models for system quality analysis.
\newblock In {\em Proceedings of EDOC}.

\bibitem[\protect\citeauthoryear{Poole}{Poole}{1993}]{Poole}
{\sc Poole, D.} 1993.
\newblock Probabilistic {Horn} abduction and {Bayesian} networks.
\newblock {\em Artif. Intell.\/}~{\em 64,\/}~1, 81--129.

\bibitem[\protect\citeauthoryear{Poole}{Poole}{2008}]{PooleICL}
{\sc Poole, D.} 2008.
\newblock The independent choice logic and beyond.
\newblock In {\em Probabilistic ILP}. 222--243.

\bibitem[\protect\citeauthoryear{Raedt, Kimmig, and Toivonen}{Raedt
  et~al\mbox{.}}{2007}]{deRaedt}
{\sc Raedt, L.~D.}, {\sc Kimmig, A.}, {\sc and} {\sc Toivonen, H.} 2007.
\newblock {ProbLog}: A probabilistic prolog and its application in link
  discovery.
\newblock In {\em IJCAI}. 2462--2467.

\bibitem[\protect\citeauthoryear{Richardson and Domingos}{Richardson and
  Domingos}{2006}]{mln}
{\sc Richardson, M.} {\sc and} {\sc Domingos, P.} 2006.
\newblock Markov logic networks.
\newblock {\em Machine Learning\/}.

\bibitem[\protect\citeauthoryear{Riguzzi and Swift}{Riguzzi and
  Swift}{2010}]{RiguzziSwift10b}
{\sc Riguzzi, F.} {\sc and} {\sc Swift, T.} 2010.
\newblock Tabling and answer subsumption for reasoning on logic programs with
  annotated disjunctions.
\newblock In {\em Tech. Comm. of ICLP}. 162–--171.

\bibitem[\protect\citeauthoryear{Russell and Norvig}{Russell and
  Norvig}{2003}]{aibook}
{\sc Russell, S.} {\sc and} {\sc Norvig, P.} 2003.
\newblock {\em Arficial Intelligence: A Modern Approach}.
\newblock Prentice~Hall.

\bibitem[\protect\citeauthoryear{Sato and Kameya}{Sato and
  Kameya}{1997}]{sato-kameya-prism}
{\sc Sato, T.} {\sc and} {\sc Kameya, Y.} 1997.
\newblock {PRISM:} a symbolic-statistical modeling language.
\newblock In {\em IJCAI}.

\bibitem[\protect\citeauthoryear{Sato and Kameya}{Sato and Kameya}{1999}]{sato}
{\sc Sato, T.} {\sc and} {\sc Kameya, Y.} 1999.
\newblock Parameter learning of logic programs for symbolic-statistical
  modeling.
\newblock {\em Journal of Artificial Intelligence Research\/}, 391--454.

\bibitem[\protect\citeauthoryear{Singh, Ramakrishnan, Ramakrishnan, Warren, and
  Wong}{Singh et~al\mbox{.}}{2008}]{Sensys}
{\sc Singh, A.}, {\sc Ramakrishnan, C.~R.}, {\sc Ramakrishnan, I.~V.}, {\sc
  Warren, D.}, {\sc and} {\sc Wong, J.} 2008.
\newblock A methodology for in-network evaluation of integrated
  logical-statistical models.
\newblock In {\em SenSys}. 197--210.

\bibitem[\protect\citeauthoryear{Swift, Warren, et~al\mbox{.}}{Swift
  et~al\mbox{.}}{2012}]{XSB}
{\sc Swift, T.}, {\sc Warren, D.~S.}, {\sc et~al\mbox{.}} 2012.
\newblock The {XSB} logic programming system, {V}ersion 3.3.
\newblock Tech. rep., Computer Science, SUNY, Stony Brook.
\newblock \url{http://xsb.sourceforge.net}.

\bibitem[\protect\citeauthoryear{Tamaki and Sato}{Tamaki and Sato}{1986}]{OLDT}
{\sc Tamaki, H.} {\sc and} {\sc Sato, T.} 1986.
\newblock {OLD} resolution with tabulation.
\newblock In {\em ICLP}. 84--98.

\bibitem[\protect\citeauthoryear{Vennekens, Denecker, and Bruynooghe}{Vennekens
  et~al\mbox{.}}{2009}]{CPlogic}
{\sc Vennekens, J.}, {\sc Denecker, M.}, {\sc and} {\sc Bruynooghe, M.} 2009.
\newblock {CP-logic}: A language of causal probabilistic events and its
  relation to logic programming.
\newblock {\em TPLP\/}.

\bibitem[\protect\citeauthoryear{Vennekens, Verbaeten, and
  Bruynooghe}{Vennekens et~al\mbox{.}}{2004}]{lpad}
{\sc Vennekens, J.}, {\sc Verbaeten, S.}, {\sc and} {\sc Bruynooghe, M.} 2004.
\newblock Logic programs with annotated disjunctions.
\newblock In {\em ICLP}. 431--445.

\bibitem[\protect\citeauthoryear{Wang and Domingos}{Wang and
  Domingos}{2008}]{hmln}
{\sc Wang, J.} {\sc and} {\sc Domingos, P.} 2008.
\newblock Hybrid markov logic networks.
\newblock In {\em Proceedings of AAAI}.

\end{thebibliography}

\newpage


\end{document}